\newtheorem{myTheorem}{Theorem}
\newtheorem{myLemma}{Lemma}
\newtheorem{myAssumption}{Assumption}
\newtheorem{Definition}{Definition}
\begin{document}

\title{Embedding Graph Auto-Encoder for Graph Clustering}

\author{Hongyuan Zhang, Rui Zhang$^*$,\IEEEmembership{~Member,~IEEE}, and Xuelong Li,\IEEEmembership{~Fellow,~IEEE}\\

\thanks{$*$ Rui Zhang is the corresponding author.}

\thanks{Hongyuan Zhang, Rui Zhang, and Xuelong Li are with the School of Computer Science and Center for OPTical IMagery Analysis and Learning (OPTIMAL), Northwestern Polytechnical University, Xi'an 710072, Shaanxi, P. R. China.}

\thanks{E-mail: hyzhang98@gmail.com, ruizhang8633@gmail.com, xuelong\_li@nwpu.edu.cn}

}


\maketitle

\begin{abstract}
Graph clustering, aiming to partition nodes of a graph into various groups
via an unsupervised approach, is an attractive topic in recent years. 
To improve the representative ability, several graph auto-encoder (\textit{GAE}) models, 
which are based on semi-supervised graph convolution networks (\textit{GCN}), 
have been developed and they achieve good results compared with traditional 
clustering methods. 
However, all existing methods either fail to utilize the orthogonal 
property of the representations generated by GAE, 
or separate the clustering and the learning of neural networks. 
We first prove that the relaxed $k$-means will obtain an optimal partition 
in the inner-products used space.
Driven by theoretical analysis about relaxed $k$-means, we design a 
specific GAE-based model for graph clustering to be consistent with 
the theory, namely Embedding Graph Auto-Encoder (\textit{EGAE}). 
Meanwhile, the learned representations are well explainable such that 
the representations can be also used for other tasks.
To further induce the neural network to produce deep features that 
are appropriate for the specific clustering model, 
the relaxed $k$-means and GAE are learned simultaneously. Therefore,
the relaxed $k$-means can be equivalently regarded as a decoder that 
attempts to learn representations that can be linearly constructed by 
some centroid vectors.
Accordingly, EGAE consists of one encoder and dual decoders.
Extensive experiments are conducted to prove the superiority of EGAE and 
the corresponding theoretical analyses.
\end{abstract}

\begin{IEEEkeywords}
    Graph clustering, unsupervised representation learning, 
    graph auto-encoder, inner-products space, relaxed $k$-means.
\end{IEEEkeywords}

\section{Introduction}
Clustering, which plays important roles in plenty of applications, 
is one of the most fundamental topics 
in machine learning \cite{clustering_base,can,sc1,sc2,FMC-MTL}.
One specific task, namely graph clustering or node clustering \cite{gae,MGAE}, 
is to group nodes of a given graph, which is common in citation networks, social networks, 
recommendation systems, \textit{etc}. 
Some conventional methods (\textit{e.g.}, k-means, DBSCAN \cite{dbscan}, AP \cite{AP}, \textit{etc.})
only utilize the features of nodes but ignore the graph structure.
One applicable kind of clustering model is the graph-based model such 
as spectral clustering \cite{sc1,sc2}. 
These methods only employ the graph but neglect the features of nodes.
Although some models \cite{FKSC} use both features and graph 
structure, the capacity of the model limits the performance a lot.

With the rise of deep learning, 
many efforts have been made to promote the capacities of traditional 
clustering models via neural networks \cite{ae,spectral_net,sae,DFKM,DeepCAN}.
Auto-encoder (\textit{AE}) \cite{ae}, as a classical variant of neural 
network for unsupervised learning, is often employed to perform 
clustering \cite{sae,structae}. 
Roughly speaking, they employ a multi-layer neural network, namely encoder, 
to learn non-linear features and reconstruct raw features from the learned 
features via the decoder. 
However, most of them separate clustering from training auto-encoder. 
DEC \cite{dec} embeds a self-supervised clustering model into AE and 
optimizes it by stochastic gradient descent (\textit{SGD}). 
Since SGD is applied for clustering, DEC converges slowly. 
SpectralNet \cite{spectral_net}, which is not based on AE, intends to extend spectral clustering with neural networks. 
However, all these methods fail to utilize the structure information provided by graph type data.

A highly related task of graph clustering is network embedding, 
a fundamental task that aims to learn latent representations 
(namely embedding) for nodes of a graph. 
Network embedding \cite{netgan,graphgan,deep_walk} has been applied in 
diverse applications, such as community networks 
\cite{graph_embedding_1,graph_embedding_2}, bioscience, recommendation systems \cite{deep_walk}, 
\textit{etc}. 
Since the network embedding and node clustering are compatible, it is 
natural to integrate node clustering and network embedding \cite{MGAE,TADW,deep_walk}. 
Specifically, we can utilize the embedding to perform node clustering.
Due to the success of CNN, graph convolution neural networks (\textit{GCN}) 
have been widely studied for network embedding. 
A focusing problem is how to extend the convolution operation into 
irregular data. 
All existing convolution operations can be classified as spectral 
based methods \cite{spectralgcn,chebnet,gcn} 
and spatial based methods \cite{patchysan,spatial1,gat} 
according to \cite{gcn_survey}. 
On the one hand, spectral-based methods are motivated by the convolution 
theorem, the Fourier transformation, and the characteristics of the Laplacian 
operator \cite{graph_op}. 
In spectral-based models, the spectral domain is regarded as the 
frequency domain. 
On the other hand, spatial-based methods  do not transform the domain 
and focus on how to select nodes to perform convolution. 
For example, PATCH-SAN \cite{patchysan} orders other nodes for a node 
and chooses top $k$ neighbors to perform convolution. 
In particular, GCN proposed in \cite{gcn} combines spectral-based models 
and spatial-based methods. 
It employs linear approximation of convolution kernels via Chebyshev 
polynomials \cite{chebnet}. 
Although GCNs are usually employed for semi-supervised learning,
some graph auto-encoders \cite{gae,MGAE,AdaGAE,GALA}, inspired by auto-encoders, 
are proposed for unsupervised representation learning and graph clustering.
However, all these methods often suffer from overfitting and most of them 
overlook the crucial characteristics of the 
generated representations such that some unsuitable clustering methods 
are applied on embeddings. 
Besides, they separate the clustering process from the optimization of GAE.

In this paper, we propose a GAE-based clustering model, 
Embedding Graph Auto-Encoder (\textit{EGAE}), for both graph clustering 
and unsupervised representation learning. 
The main contributions include: 
\begin{itemize}
    \item[1)] We prove that the relaxed $k$-means can obtain the optimal 
                partition with inner-products if some conditions hold. 
    \item[2)] Since the decoder of GAE rebuilds the graph according to inner-products, 
                a specific architecture is designed to cater to the conditions 
                of theoretical analyses. Besides, the learned embedding is well 
                explainable such that EGAE is also a qualified representation 
                learning model.
    \item[3)] Insteading of learning representations and performing clustering separately,
                the two tasks are processed simultaneously. 
                In particular, the part of clustering can be regarded as a 
                decoder as well. Equivalently, EGAE employs two decoders to 
                capture different information.
\end{itemize}

\begin{figure*}[t]
    \centering
    \includegraphics[width=\linewidth]{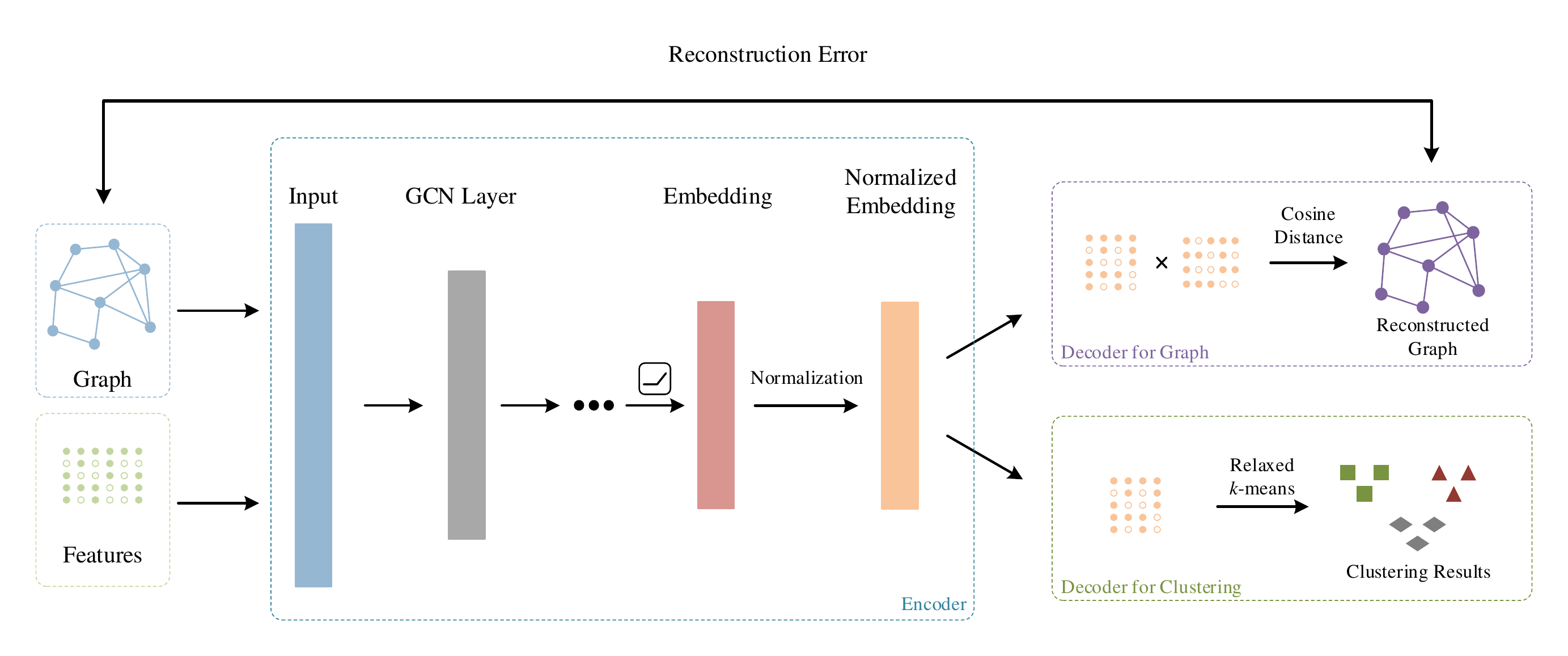}
    \caption{Conceptual illustration of EGAE. Inputs of EGAE consist of two 
            parts, graph and features. 
            After encoding, data is mapped into a latent feature space which 
            employs inner-products as metric. 
            Note that the last layer of encoder utilizes ReLU as 
            activation function to ensure Assumption \ref{assumption_nonnegative_inner_product} 
            holds, and the normalization step is designed due to Assumption 
            \ref{assumption_strong}. 
            Relaxed $k$-means is embedded into GAE to induce it to generate 
            preferable embeddings.
            Note that any complicated layers and mechanisms 
            (\textit{e.g.}, attention, pooling, \textit{etc.}) can be integrated into our framework.}
    \label{figure_architecture}
\end{figure*}

\section{Background}

\subsection{Notations} 
In this paper, all matrices are represented by uppercase words and all vectors 
are denoted by bold lowercase words. 
For a matrix $M$,  ${\rm tr}(M)$ is the trace of $M$ and $M \geq 0$ means 
all elements are non-negative. 
${\rm diag}(\bm m)$ denotes a diagonal matrix whose $(i,i)$-th entry is $m_i$. 
$I$ denotes the identify matrix, $\textbf{1}_n \in \mathbb R^{n}$ denotes 
vector whose elements are all 1. 
If $x$ is a positive scalar, $sign(x) = 1$.
If $x$ is negative, $sign(x) = -1$.
In particular, $sign(0) = 0$.
$\nabla \cdot$ is the gradient operator. 
In general, we use $n$, $d$, and $c$ are used to represent the size of datasets,
dimension of data points, and amount of clusters respectively. 
Given a dataset $\{\bm x_i\}_{i=1}^n$, it can be denoted by 
\begin{equation}
    X = 
    \left [
    \begin{array}{c}
        \bm x_1^T \\
        \bm x_2^T \\
        \vdots \\
        \bm x_n^T
    \end{array}
    \right ] 
    \in \mathbb{R}^{n \times d} .
\end{equation}
The target of clustering is to partition $\{\bm x_i\}_{i=1}^n$ into $c$ groups,
$\{\mathcal{C}_i\}_{i=1}^c$. $|\mathcal{C}_i|$ denotes the amount of samples 
that are assigned to $\mathcal{C}_i$.
We assume that the graph is stored by an adjacency matrix, $A$.



\subsection{Convolution on Graph}
To apply convolution operation on irregular data, we utilize a graph convolution operation that can be explained as both spectral operators \cite{spectralgcn} and spatial operators \cite{patchysan,spatial1}. According to the convolution theorem, the convolution operator can be defined from the frequency domain, which is conventionally named as the spectral domain of graph signals. The adjacency matrix $A$ is employed to represent a graph. Formally, $A_{ij} = 1$ if the $i$-th point is connected with the $j$-th one; $A_{ij} = 0$, otherwise.
$\mathcal L = I - D^{-\frac{1}{2}} A D^{-\frac{1}{2}}$ is the normalized Laplacian matrix where $D$ is diagonal and $D_{ii} = \sum _{j=1}^n A_{ij}$, while $n$ denotes the amount of nodes in the graph. 
Formally, a spatial signal of graph $x \in \mathbb R^n$ can be transformed 
into spectral domain by $U \bm x$ where $\mathcal L = U^T \Lambda U$ 
(eigenvalue decomposition). 
If the convolution kernel $\bm \theta$ is constrained as a function of $\Lambda$, 
a spectral convolution operator can be defined as \cite{gcn_survey}
\begin{equation}
    f(\bm x; \theta) = U^T g(\Lambda; \bm \theta) U \bm x .
\end{equation}
Suppose that $g(\Lambda; \theta)$ is diagonal and can be approximated by Chebyshev polynomials \cite{chebnet}. If the linear approximation is utilized \cite{gcn}, the convolution can be defined as
    $f(\bm x; \bm \theta) = U^T (\theta_0 - \theta_1 \tilde \Lambda) U \bm x = (\theta_0 I - \theta_1 \mathcal{\tilde L}) \bm x$
where $\tilde \Lambda = \frac{2}{\lambda_{max}} \Lambda - I$, $\lambda_{max}$ represents the maximum eigenvalue of $\mathcal L$, and $\mathcal{\tilde L} = U^T \tilde \Lambda U$. To reduce the amount of parameters to learn and simplify the graph convolutional network, suppose that $\theta_0 = - \theta_1$ and $\lambda_{max} \approx 2$. Accordingly, the above equation becomes $f(\bm x; \bm \theta) = \theta_0 (I + D^{-\frac{1}{2}} A D^{-\frac{1}{2}}) \bm x$. Furthermore, we can renormalize the convolutional matrix as
\begin{equation}
    \mathcal {\hat L} = \hat D^{-\frac{1}{2}} (I + A) \hat D^{-\frac{1}{2}} ,
\end{equation}
where $\hat D_{ii} = \sum _{j=1}^n (I + A)_{ij}$. Therefore, the signal processed by convolution can be rewritten as 
    $f(\bm x; \bm \theta) = \theta_0 \mathcal{\hat L} \bm x$.
If the graph signal is multiple-dimensional 
and $d'$ convolution kernels are applied, then we have 
\begin{equation}
    f(X; W) = \mathcal{\hat L} X W ,
\end{equation}
where $W \in \mathbb R^{d \times d'}$ is the parameters to learn. 
From the spatial perspective, $\mathcal{\hat L}$ is the normalized 
Laplacian of $I + A$, which is particularly the adjacency of the 
original graph $A$ with self-loops. 
Particularly, $\mathcal{\hat L} X$ is equivalent to aggregate the 
information from neighbors, 
since $\bm{\hat x}_i = \sum _{j \in \mathcal N_{i}} \mathcal{\hat L}_{ij} \bm x_i$ 
where $\mathcal N_i$ represents the neighbors of node $\bm x_i$ and 
$\bm{\hat x}_i$ is the $i$-th column of $\mathcal{\hat L} X$. 
To further improve the efficiency of convolution, 
some works \cite{MixHopGCN,SGC,AGC} employ the high-order Laplacian to 
equivalently increase the depth of GCNs.

\subsection{Graph Auto-Encoder}
Inspired by the conventional auto-encoders, 
graph auto-encoder \cite{gae} (\textit{GAE}) employs multiple GCN layers to learn 
embeddings of nodes.
Rather than restoring the inputted features from deep representations, 
GAE intends to reconstruct the graph since connections of nodes 
can be regarded as weakly supervised information. 
Specifically, the decoder computes inner-products between any two nodes 
and then maps them into a probability space to model similarities 
via the sigmoid function.
In GAEs \cite{gae,MGAE,AGC,agae}, architectures of the network are often asymmetric 
while a variant \cite{GALA} designs the symmetric architecture via 
graph sharpening. 
For instance, Adversarial Regularized Graph Auto-Encoder \cite{agae} incorporates the 
adversarial learning into GAE to enhance the robustness.
GAE with Adaptive Graph Convolution \cite{AGC} employs the high-order convolution 
operator to promote the capacity of GAE. 

\section{Embedding Graph Auto-Encoder with Clustering}

In this section, we will first discuss the drawbacks of existing models that 
employ graph convolution networks for graph clustering. Then we propose 
\textit{Embedding Graph Auto-Encoder} to address the mentioned problems. 
The architecture of EGAE is illustrated in Fig. \ref{figure_architecture}.

\subsection{Problem Revisit}
In network embedding, divergences between nodes are usually measured by 
inner-product distances (or namely inner-product similarity). 
Formally, 
\begin{Definition} \label{assumption_orth}
    For nodes $i$, $j$, and $k$, node $i$ is more similar with node $j$ 
    than $k$ under inner-products 
    if $\bm x_i^T \bm x_j \geq \bm x_i^T \bm x_k$ where $\bm x_i$ 
    is the representation of node $i$. 
    $\bm x_i^T \bm x_j^T$ is named as the inner-products 
    between $\bm x_i$ and $\bm x_j$.
\end{Definition}
Compared with the traditional auto-encoders, GAE, which is designed as an 
unsupervised model for graph data, attempts to reconstruct the graph matrix 
according to inner-products. 
However, most GAE-based models, which are used for graph clustering, neglect 
that the learned embedding scatters in a inner-product space rather than Euclidean 
space, such that clustering models based on Euclidean distances usually 
give unsatisfactory results. For instance, $k$-means are used for node clustering 
in GAE \cite{gae} and AGAE \cite{agae} but the adjacency matrix is reconstructed 
by inner-products. 
Overall, they pay more attention to how to learn embedding efficiently but 
fail to focus on the clustering task.
Inappropriate clustering methods may provide biased results due to the 
conflicting metrics. 

MGAE \cite{MGAE} and AGC \cite{AGC} utilize the spectral clustering by 
constructing a similarity matrix regarding inner-products. 
Since inner-products may be negative, they simply use absolute values as 
valid similarities, which also results in a problem. Formally, if the 
deep representation learned by GAE is denoted by $Z \in \mathbb{R}^{n \times d'}$,
the similarity matrix is constructed as 
\begin{equation} \label{similarity_improper}
    S = |Z Z^T| .
\end{equation}
Note that $s_{ij} = |\bm z_i^T \bm z_i|$. Provided that $\bm z_i^T \bm z_j < 0$ 
and $\bm z_i^T \bm z_k = 0$, the $i$-th node is more similar to the $k$-th one 
according to the definition of inner-product distance. 
However, $s_{ij} > s_{ik}$ will hold according to Eq. (\ref{similarity_improper}).
The reversed relationships probably mislead the spectral clustering from 
a theoretical aspect.

\subsection{Motivation: An Approach for Clustering in Inner-Product Space} \label{section_motivation}
Before proposing our model formally, we first elaborate the motivation 
theoretically. 
$K$-means, as a fundamental method for clustering, attempts to solve the 
following problem via a greedy method, 
\begin{equation} \label{eq_k_means}
    \begin{split}
        \min \limits_{\bm f_j, g_{ij}} & \sum \limits_{i = 1}^n \sum \limits_{j = 1}^c g_{ij}\|\bm x_i - \bm f_j\|_2^2 \\
        s.t. ~ & g_{ij} \in \{0, 1\}, \sum \limits_{j=1}^c g_{ij} = 1,
    \end{split}
\end{equation}
where $\{\bm f_j\}_{j=1}^c$ denotes centroids of $c$ clusters 
and $g_{ij}$ is the indicator. 
Specifically speaking, 
$g_{ij} = 1$ if the $i$-th point is assigned to the $j$-th cluster.
Otherwise, $g_{ij} = 0$. 
Clearly, $k$-means utilize an implicit assumption that Euclidean distance 
can appropriately depict divergences of data points. 

Let $g_{ij}$ be the $(i, j)$-th entry of matrix $G$ and 
$F = [\bm f_1, \bm f_2, \cdot, \bm f_c] \in \mathbb{R}^{d \times c}$.
Then problem (\ref{eq_k_means}) is equivalent to the following problem, 
\begin{equation}
    \begin{split}
        \min \limits_{F, G} & \|X^T - F G^T\|_F^2, \\
        s.t. ~ & g_{ij} \in \{0, 1\}, G \textbf{1}_c = \textbf{1}_n .
    \end{split}
\end{equation}

The above problem is hard to solve directly due to the discrete constraint 
on $G$. 
Inspired by the spectral clustering, the indicator can be formulated as 
\begin{equation}
    \hat G = G D_G^{-\frac{1}{2}},
\end{equation}
where $D_G$ is a diagonal matrix which satisfies 
$(D_G)_{ii} = \sum _{j=1}^n g_{ji} = |\mathcal{C}_i|$. 
Note that $\hat G$ satisfies that $\hat G^T \hat G = I$.
By substituting $\hat G$ for $G$, the objective of $k$-means can be derived 
as 
\begin{equation} \label{eq_hat_J_c}
    \begin{split}
        \mathcal{\hat J}_c & = \|X^T - F \hat G^T\|_F^2 \\
        & = {\rm tr}(X^T X) - 2 {\rm tr}(X^T \hat G F^T) + {\rm tr}(F \hat G^T \hat G F^T).
    \end{split}
\end{equation}
Take the derivative of $\mathcal{\hat J}_c$ and set it to $0$, 
\begin{equation}
    \nabla_F \mathcal{\hat J}_c = -2 X^T \hat G + 2 F \hat G^T \hat G = 0.
\end{equation}
Accordingly, we have 
\begin{equation} \label{eq_F}
    F = F \hat G^T \hat G = X^T \hat G .
\end{equation}
Furthermore, $\mathcal{J}_c$ can be written as 
\begin{equation}
    \begin{split}
        \mathcal{\hat J}_c = {\rm tr}(X^T X) - {\rm tr}(\hat G^T X X^T \hat G) .
    \end{split}
\end{equation}
Then, problem (\ref{eq_k_means}) is converted into the following problem, 
\begin{equation}
    \begin{split}
        \max \limits_{\hat G} ~ & {\rm tr}(\hat G^T X X^T \hat G) \\
        s.t. ~ & \hat g_{ij} \in \{0, \frac{1}{|\mathcal{C}_j|}\}, \hat G^T \hat G = I.
    \end{split}
\end{equation}

\begin{algorithm}[t]
    \centering
    \caption{Algorithm to optimize problem (\ref{eq_relaxed_k_means}).}
    \label{alg_relaxed_k_means}
    \begin{algorithmic}
        \REQUIRE Data $X$.
        \STATE Calculate $c$ leading left singular vectors, $P$, of $X$.
        \STATE Normalize rows of $P$.
        \STATE Perform $k$-means on normalized $P$.
        \ENSURE Clustering assignments and $P$.
    \end{algorithmic}
\end{algorithm}

Althouth the above problem is still intractable as a result of the discrete 
constraint, it is easy to solve the continuous problem via 
\textit{singular value decomposition (SVD)}, \textit{i.e.},
\begin{equation} \label{eq_relaxed_k_means}
    \begin{split}
        \max \limits_{P} ~ & {\rm tr}(P^T X X^T P) , \\
        s.t. ~ & P^T P = I ,
    \end{split}
\end{equation}
where $P$ is the continuous indicator matrix. 
The algorithm is summarized in Algorithm \ref{alg_relaxed_k_means}.

Although the relaxation seems a common trick from the mathematical perspective, 
Theorem \ref{theorem_ideal} shows us the relaxed $k$-means actually assumes that all 
samples scatter in a inner-product space.
Before providing Theorem \ref{theorem_ideal}, 
two assumptions, the basic of our theoretical analysis, are given 
as follows.
\begin{myAssumption}[Non-Negative Property] \label{assumption_nonnegative_inner_product}
    For any two data points $\bm x_i$ and $\bm x_j$, 
    $\bm x_i^T \bm x_j \geq 0$ always holds.
\end{myAssumption}
Here we discuss the ideal situation: $\bm x_i^T \bm x_j = 0$ 
if and only if $\bm x_i$ and $\bm x_j$ belong to two clusters.
Without loss of generality, $Z Z^T$ can be therefore formulated as 
\begin{equation}
    Q = X X^T = 
    \left [
    \begin{array}{c c c c}
        Q^{(1)} & & & \\
        & Q^{(2)} & & \\
        & & \ddots & \\
        & & & Q^{(c)}
    \end{array}
    \right ]
    ,
\end{equation}
Note that $Q$ can be regarded as a similarity matrix if divergences of 
data points are measured by inner-products. And $Q^{(i)}$ can be thus 
viewed as similarities of samples that belong with the cluster 
$\mathcal{C}_i$. 
\begin{myAssumption} \label{assumption_strong}
    Let $\lambda^{(m)}_i$ be the $i$-th largest eigenvalue of $Q^{(m)}$. 
    For any $a$ and $b$, $\lambda_1^{(a)} > \lambda_2^{(b)}$ always holds.
\end{myAssumption}

\begin{figure}[t]
	\centering
	\includegraphics[width=.8\linewidth]{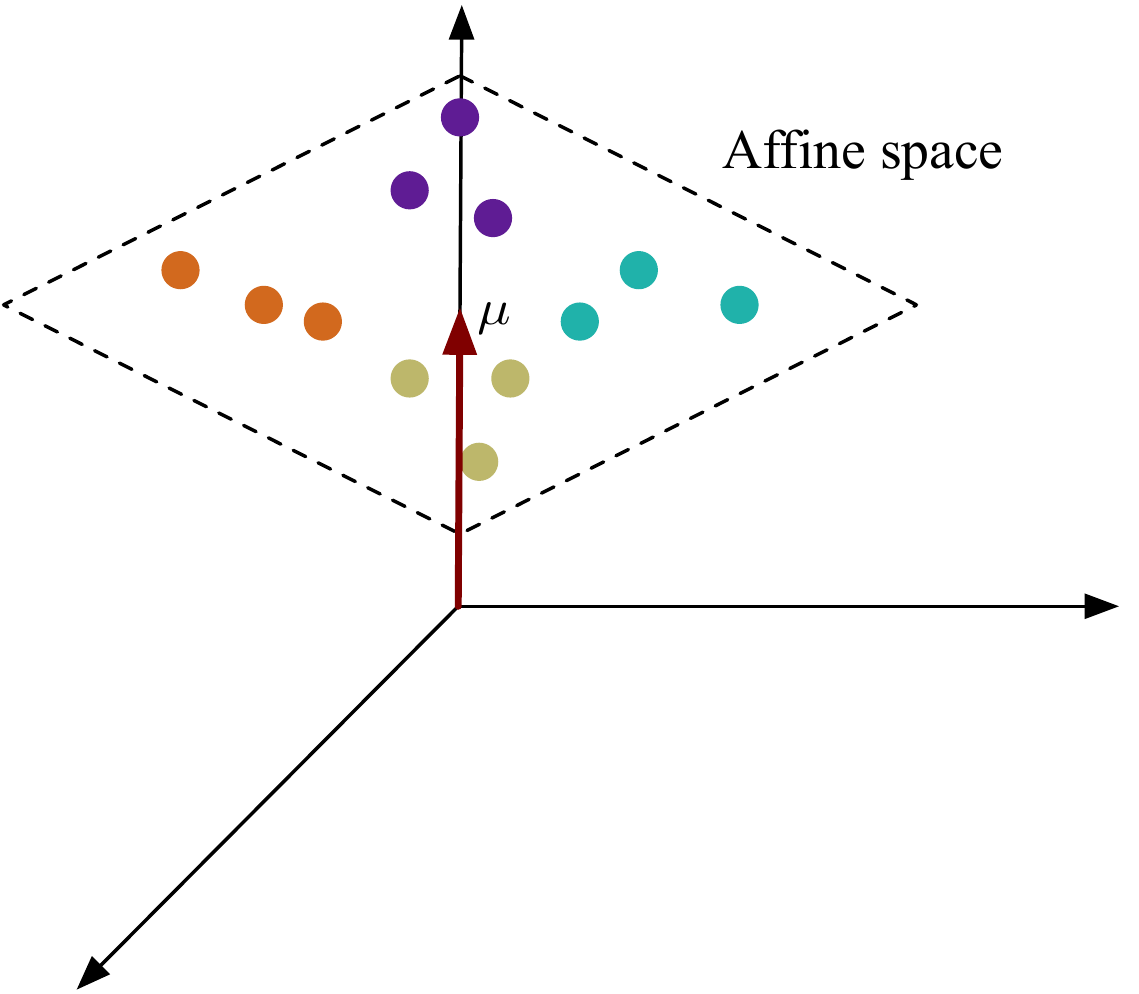}
    \caption{Illustration of Theorem \ref{theorem_connection}. 
                $\mu = \frac{1}{n} \sum _{i=1}^n \bm x_i$ denotes the mean 
                vector of samples.}
	\label{figure_theorem_connection}
\end{figure}

Although Assumption \ref{assumption_strong} seems too strong and impractical, 
we will design a GAE-based model in the next subsection to generate 
qualified embeddings. 
Based on these assumptions, we state Theorem \ref{theorem_ideal} formally.
\begin{myTheorem} \label{theorem_ideal}
    Suppose that Assumption \ref{assumption_nonnegative_inner_product} 
    and \ref{assumption_strong} hold. 
    Provided that $\bm x_i^T \bm x_j = 0$ holds if and only if 
    $\bm x_i$ and $\bm x_j$ belong to two different clusters,
    Algorithm \ref{alg_relaxed_k_means} will give an ideal partition.
\end{myTheorem}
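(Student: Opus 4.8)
The plan is to compute the output $P$ of Algorithm~\ref{alg_relaxed_k_means} explicitly, exploit the block structure of $Q = XX^T$ forced by the hypothesis, and show that after the row normalization every node of a given cluster is sent to the same point while nodes of distinct clusters are sent to distinct points; running $k$-means with $c$ centroids on such a configuration then returns the ideal partition because that partition attains the (nonnegative) objective value $0$ in problem~(\ref{eq_k_means}). First I would note that the $c$ leading left singular vectors of $X$ are exactly the eigenvectors of $Q = XX^T$ belonging to its $c$ largest eigenvalues. Since $\bm x_i^T\bm x_j = 0$ whenever $i$ and $j$ lie in different clusters, after permuting rows of $X$ so that nodes of the same cluster are contiguous, $Q$ is block-diagonal with blocks $Q^{(m)} = X^{(m)}(X^{(m)})^T$; hence the spectrum of $Q$ is the union (with multiplicity) of the spectra of the $Q^{(m)}$, and a leading eigenvector of $Q^{(m)}$, zero-padded on the remaining coordinates, is an eigenvector of $Q$.

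Next I would identify which eigenvectors Algorithm~\ref{alg_relaxed_k_means} selects. Taking $a=b=m$ in Assumption~\ref{assumption_strong} gives $\lambda_1^{(m)} > \lambda_2^{(m)}$, so $\lambda_1^{(m)}$ is a simple eigenvalue of every block; taking general $a,b$ gives that each $\lambda_1^{(a)}$ strictly exceeds every $\lambda_i^{(b)}$ with $i\ge 2$. Therefore the $c$ largest eigenvalues of $Q$ are precisely $\lambda_1^{(1)},\dots,\lambda_1^{(c)}$, one from each block and strictly separated from the rest, and the top-$c$ eigenspace of $Q$ is $\bigoplus_{m=1}^{c}\mathrm{span}(\bm v^{(m)})$ with $\bm v^{(m)}$ the zero-padded leading eigenvector of $Q^{(m)}$. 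Consequently, up to the unavoidable orthogonal change of basis $R\in O(c)$ inherent to any such basis, $P = P_0 R$ where the $i$-th row of $P_0$, for a node $i$ in cluster $m$, equals $v^{(m)}_i\,\bm e_m^T$; thus the $i$-th row of $P$ equals $v^{(m)}_i\,(\bm e_m^T R)$, i.e. a scalar multiple of the $m$-th row of $R$.

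The step I expect to be the main obstacle is guaranteeing that no $v^{(m)}_i$ vanishes and that all entries of $\bm v^{(m)}$ have the same sign, so that row normalization collapses each cluster to one unit vector. This is exactly where Assumption~\ref{assumption_nonnegative_inner_product} together with the hypothesis ``$\bm x_i^T\bm x_j = 0$ iff $i,j$ are in different clusters'' is needed: within a cluster all inner products are strictly positive, so each $Q^{(m)}$ is entrywise positive, hence irreducible, and by the Perron--Frobenius theorem its leading eigenvalue is simple with an eigenvector that may be chosen entrywise positive, i.e. $v^{(m)}_i>0$ for all $i$. Then the normalized $i$-th row of $P$ equals ${\rm sign}(v^{(m)}_i)\,(\bm e_m^T R) = \bm e_m^T R$, a unit row vector independent of $i$; hence all nodes of cluster $m$ coincide at the $m$-th row of $R$, and since $R$ is invertible these $c$ points are pairwise distinct. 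Finally I would conclude that on a configuration of $c$ distinct point masses the ideal partition attains cost $0$ in problem~(\ref{eq_k_means}), the global minimum, so $k$-means on the normalized rows returns the ideal partition, which proves the theorem.
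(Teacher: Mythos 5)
Your proposal is correct and follows essentially the same route as the paper's proof: block-diagonalize $Q=XX^T$ by the orthogonality hypothesis, show each block's leading eigenvector is entrywise positive, observe the computed $P$ equals this block-diagonal basis up to an orthogonal factor $R$, and conclude that row normalization collapses each cluster onto a distinct row of $R$. The only cosmetic difference is that you invoke Perron--Frobenius where the paper proves the same fact as its self-contained Lemma~\ref{lemma}, and you are somewhat more explicit than the paper about how Assumption~\ref{assumption_strong} guarantees the top-$c$ eigenvalues are exactly one per block.
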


Moreover, the following theorem shows the connection between problem 
(\ref{eq_relaxed_k_means}) and the spectral clustering with normalized cut.
\begin{myTheorem} \label{theorem_connection}
    Problem (\ref{eq_relaxed_k_means}) is equivalent to the spectral 
    clustering with normalized cut if and only if the mean vector 
    $\bm \mu$ is perpendicular to the centralized data. Or equivalently, 
    $\bm \mu$ is perpendicular to the affine space that all data points 
    lie in.
\end{myTheorem}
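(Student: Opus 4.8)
The plan is to make both sides of the claimed equivalence explicit and reduce each to a single statement about the degree matrix of the inner-product similarity graph. First I would recall that problem~(\ref{eq_relaxed_k_means}) is solved by the $c$ leading eigenvectors of $Q = X X^T$, followed by the row-normalization and $k$-means of Algorithm~\ref{alg_relaxed_k_means}. Normalized-cut spectral clustering is run on the similarity matrix $S = X X^T$ --- which is a legitimate, non-negative similarity under Assumption~\ref{assumption_nonnegative_inner_product}, so the absolute value appearing in Eq.~(\ref{similarity_improper}) is unnecessary here --- and its relaxation is $\max_{H^T H = I} {\rm tr}(H^T D^{-\frac{1}{2}} X X^T D^{-\frac{1}{2}} H)$, whose solution is the $c$ leading eigenvectors of $D^{-\frac{1}{2}} X X^T D^{-\frac{1}{2}}$, where $D$ is diagonal with $D_{ii} = \sum_{j=1}^n (X X^T)_{ij}$, again followed by the same post-processing. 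So I read ``equivalent'' as: the two trace-maximization problems are the same problem, up to a harmless positive rescaling of the objective that does not move the maximizer.

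Next I would compute the degrees. Because $\sum_{j=1}^n \bm x_j = n \bm\mu$, we have $D_{ii} = \bm x_i^T \sum_{j=1}^n \bm x_j = n\, \bm x_i^T \bm\mu$. If $\bm x_i^T \bm\mu$ is the same for every $i$, then $D = \beta I$ for one constant $\beta > 0$, hence $D^{-\frac{1}{2}} X X^T D^{-\frac{1}{2}} = \beta^{-1} X X^T$ and the two problems coincide. Conversely, if they coincide then $D^{-\frac{1}{2}} X X^T D^{-\frac{1}{2}} = \beta^{-1} X X^T$ entrywise for some $\beta > 0$; comparing the diagonal entries, which equal $\|\bm x_i\|_2^2 > 0$ for any non-zero node, forces $D_{ii} = \beta$ for every $i$, i.e. $\bm x_i^T \bm\mu$ is constant. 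Thus the equivalence holds if and only if $\bm x_i^T \bm\mu$ does not depend on $i$.

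It remains to turn ``$\bm x_i^T \bm\mu$ constant'' into the geometric picture of Fig.~\ref{figure_theorem_connection}. Averaging $\bm x_i^T \bm\mu = k$ over $i$ gives $k = \bm\mu^T \bm\mu = \|\bm\mu\|_2^2$, so $\bm\mu^T(\bm x_i - \bm\mu) = 0$ for all $i$: $\bm\mu$ is orthogonal to every centralized data point. Since the vectors $\{\bm x_i - \bm\mu\}_{i=1}^n$ span the direction subspace of the affine hull that contains all the $\bm x_i$, being orthogonal to all of them is the same as $\bm\mu$ being perpendicular to that affine space; the converse is immediate. Chaining the three equivalences proves the theorem.

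I expect the delicate step to be the converse direction of the first reduction --- arguing that, once we agree ``equivalent'' means the two relaxations are literally the same optimization problem, $D$ must be a scalar multiple of $I$. The entrywise identity $D^{-\frac{1}{2}} X X^T D^{-\frac{1}{2}} = \beta^{-1} X X^T$ together with inspection of its diagonal settles this cleanly, but it does rely on the mild, implicit non-degeneracy that no node is the zero vector (and that $\bm x_i^T \bm\mu > 0$, so that $D^{-\frac{1}{2}}$ is well defined, which Assumption~\ref{assumption_nonnegative_inner_product} helps ensure); I would state these as standing hypotheses rather than belabor them.
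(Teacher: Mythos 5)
Your proposal is correct and follows essentially the same route as the paper: both identify the degree matrix of the inner-product similarity graph as $D = n\,\mathrm{diag}(X\bm\mu)$ and reduce the equivalence of the two trace maximizations to $D$ being a scalar multiple of the identity, which in turn is equivalent to $\bm\mu^T(\bm x_i - \bm\mu) = 0$ for all $i$. Your converse direction is in fact slightly more explicit than the paper's (which simply asserts $D = kI$), since you justify it by comparing diagonal entries of the entrywise identity $D^{-\frac{1}{2}} X X^T D^{-\frac{1}{2}} = \beta^{-1} X X^T$.
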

Fig. \ref{figure_theorem_connection} demonstrates Theorem 
\ref{theorem_connection} vividly.
In the next subsection, we will show details of EGAE based on the above 
theoretical knowledge.

\subsection{Embedding Graph Auto-Encoder}
In practice, Assumption \ref{assumption_nonnegative_inner_product} and 
\ref{assumption_strong} do not hold in most cases, 
especially on the original features. 
However, we can construct a representative model that can map the original 
features into valid representations. 
Based on the theoretical analyses in the previous subsection, we propose 
a theory-driven model, \textit{Embedding Graph Auto-Encoder (EGAE)}, formally.

\begin{figure*}[t]
    \centering
    \subfigure[Raw data]{
        \includegraphics[width=0.21\linewidth]{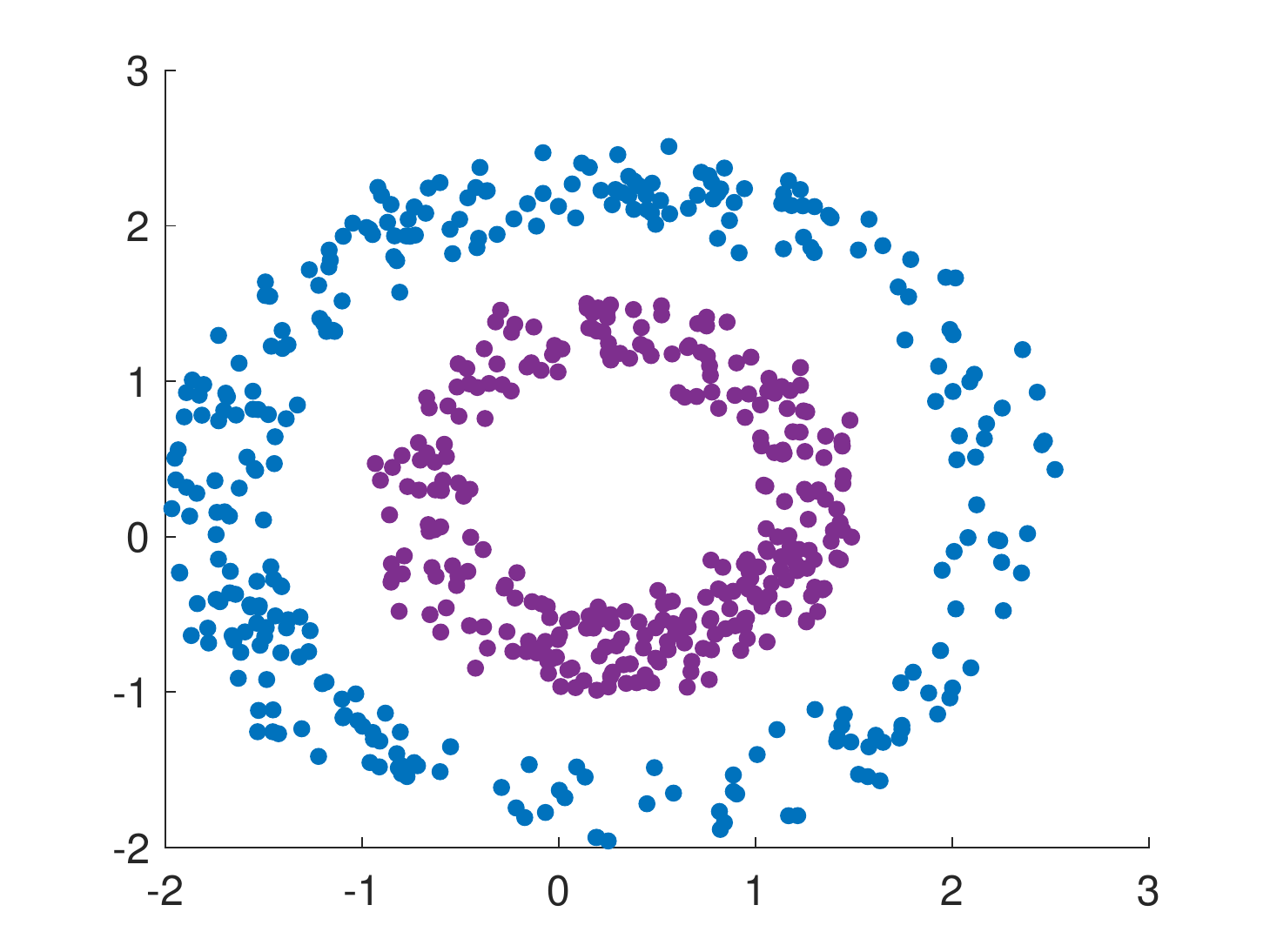}
    }
    \subfigure[After 200 iterations]{
        \includegraphics[width=0.21\linewidth]{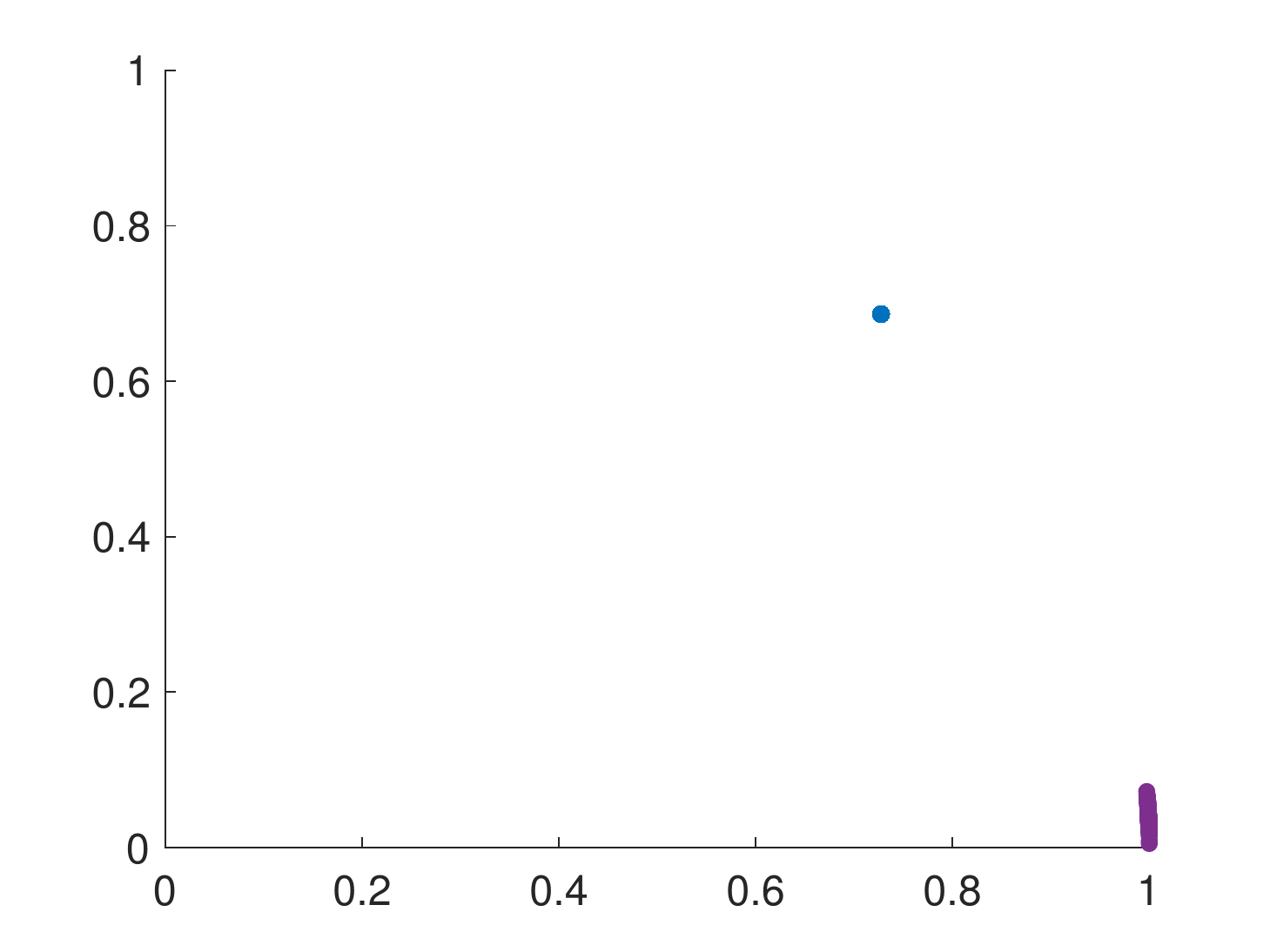}
    }
    \subfigure[After 500 iterations]{
        \includegraphics[width=0.21\linewidth]{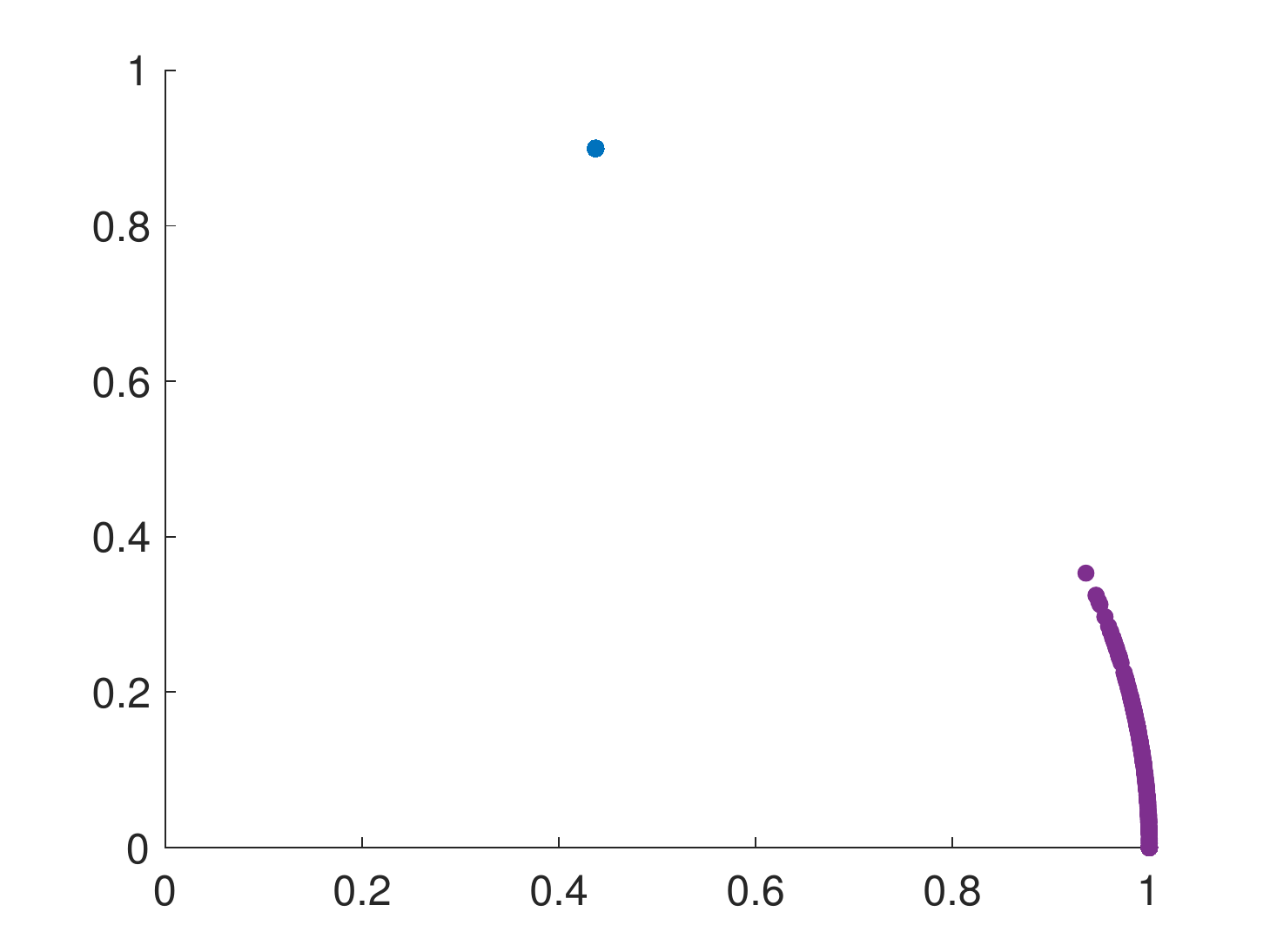}
    }
    \subfigure[After 2000 iterations]{
        \includegraphics[width=0.21\linewidth]{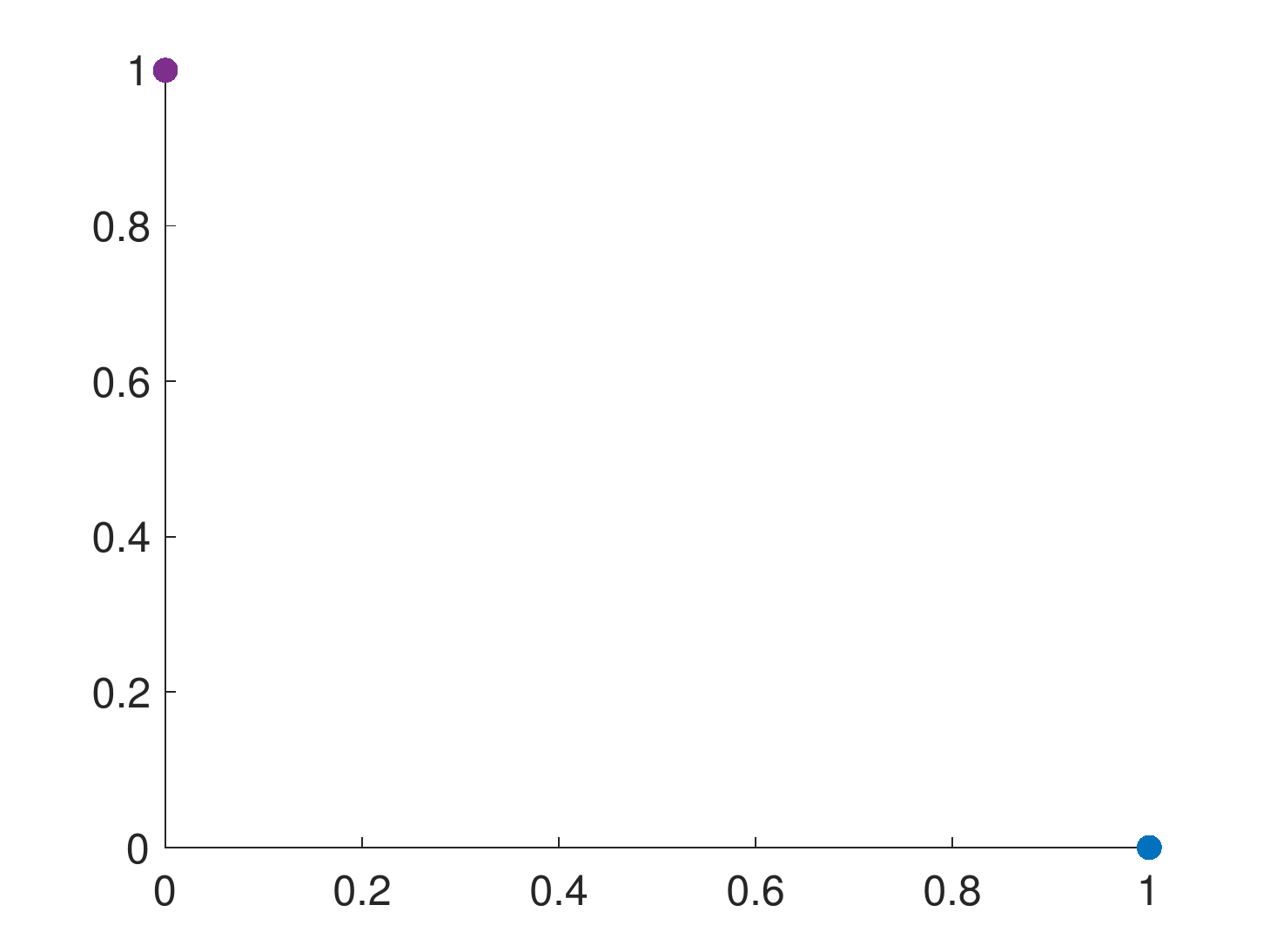}
    }
    
    \caption{Illustration of EGAE with $\alpha = 0$ on the two-rings synthetic 
    dataset. After sufficient training, EGAE projects nodes into orthogonal 
    representations.}
    \label{figure_two_rings}
\end{figure*}

\paragraph{Encoder} 
Encoder aims to learn embedding $Z$ of raw data via multiple graph 
convolution layers. The encoder can be constituted by any valid layers 
and mechanisms such as graph-attention, graph-pooling, \textit{etc}. 
To keep simplicity, only GCN layers are employed in our experiments. 
Specifically speaking, deep representations provided by the $i$-th
layer, denoted by $H_L$, is defined as 
\begin{equation}
    H_i = \varphi_i(\mathcal{\hat L} H_{i-1} W_i) ,
\end{equation}
where $\varphi_i (\cdot)$ is the activation function of the $i$-th layer. 
Let $L$ represent the total amount of layers in the encoder.
Due to Assumption \ref{assumption_nonnegative_inner_product}, 
the learned embedding $Z$ should satisfy that $Z Z^T \geq 0$. 
To satisfy this assumption, 
EGAE employs a simple but effective method by setting
\begin{equation} \label{eq_ReLU}
    \varphi_{L} (\cdot) = \text{ReLU}(\cdot) .
\end{equation}
Accordingly, the generated embeddings satisfy Assumption 
\ref{assumption_nonnegative_inner_product}. 
The key of EGAE is how to design a model to cater to Assumption 
\ref{assumption_strong}.
The following theorem gives us a feasible scheme.
\begin{myTheorem} \label{theorem_eig}
    Suppose that $\forall i, \|\bm z_i\|_2 = 1$ and $\bm z_i^T \bm z_j = 0$
    if and only if the $i$-th data point and the $j$-th one belongs to 
    different clusters. 
    Let $\frac{1}{\epsilon}$ ($\epsilon \geq 1$) denote the lower-bound 
    of non-zero entries, \textit{i.e.}, 
    \begin{equation}
        \left \{ 
        \begin{array}{l l}
            \bm z_i^T \bm z_j = 0, & \text{$\bm z_i$ and $\bm z_j$ belong to different clusters,} \\
            \bm z_i^T \bm z_j \geq \frac{1}{\epsilon}, & \text{otherwise}.
        \end{array}
        \right .
    \end{equation}
    Then, for any valid $a$ and $b$, 
    $\lambda_1^{(a)} > \lambda_2^{(b)}$ holds if 
    \begin{equation} \label{eq_epsilon_constraint}
        \epsilon < \frac{|\mathcal{C}_{min}|}{|\mathcal{C}_{max}| - 2} + 1 ,
    \end{equation}
    where $\mathcal{C}_{min}$ and $\mathcal{C}_{max}$ represent the largest 
    cluster and smallest one, respectively. 
\end{myTheorem}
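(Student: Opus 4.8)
The plan is to reduce the statement to two one‑sided eigenvalue estimates on each diagonal block $Q^{(m)}$, which here is the Gram matrix of the unit‑norm representations $\{\bm z_i : i \in \mathcal{C}_m\}$. First I would record the structural facts that come for free: $Q^{(m)}$ is symmetric positive semidefinite (being a Gram matrix), its diagonal entries all equal $1$ because $\|\bm z_i\|_2 = 1$, and its off‑diagonal entries lie in $[\tfrac{1}{\epsilon},1]$ — they are nonnegative by Assumption \ref{assumption_nonnegative_inner_product} and at least $\tfrac{1}{\epsilon}$ within a cluster by hypothesis, and at most $1$ by Cauchy–Schwarz. In particular ${\rm tr}(Q^{(m)}) = |\mathcal{C}_m|$ and every $\lambda_i^{(m)} \geq 0$.

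Next I would bound the two relevant eigenvalues of each block. For the top one, evaluating the Rayleigh quotient at the all‑ones vector gives $\lambda_1^{(m)} \geq \mathbf{1}^T Q^{(m)} \mathbf{1} / |\mathcal{C}_m|$, and since the numerator is the sum of all entries of $Q^{(m)}$ — namely $|\mathcal{C}_m|$ diagonal ones plus $|\mathcal{C}_m|(|\mathcal{C}_m|-1)$ off‑diagonal terms, each at least $\tfrac{1}{\epsilon}$ — this yields $\lambda_1^{(m)} \geq 1 + \tfrac{|\mathcal{C}_m|-1}{\epsilon}$. For the second eigenvalue I would exploit nonnegativity of the spectrum together with the trace identity: $\lambda_2^{(m)} \leq \sum_{i\geq 2}\lambda_i^{(m)} = {\rm tr}(Q^{(m)}) - \lambda_1^{(m)} \leq |\mathcal{C}_m| - 1 - \tfrac{|\mathcal{C}_m|-1}{\epsilon} = (|\mathcal{C}_m|-1)\bigl(1-\tfrac{1}{\epsilon}\bigr)$. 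This trace‑plus‑positivity step is the one genuine idea in the argument; everything around it is bookkeeping.

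Finally I would take worst cases over clusters. The lower bound on $\lambda_1^{(a)}$ is increasing in $|\mathcal{C}_a|$, so $\min_a \lambda_1^{(a)} \geq 1 + \tfrac{|\mathcal{C}_{min}|-1}{\epsilon}$; the upper bound on $\lambda_2^{(b)}$ is increasing in $|\mathcal{C}_b|$, so $\max_b \lambda_2^{(b)} \leq (|\mathcal{C}_{max}|-1)\bigl(1-\tfrac{1}{\epsilon}\bigr)$. Since the claim is exactly $\min_a \lambda_1^{(a)} > \max_b \lambda_2^{(b)}$ (which covers every pair, including $a=b$), it suffices to show $1 + \tfrac{|\mathcal{C}_{min}|-1}{\epsilon} > (|\mathcal{C}_{max}|-1)\bigl(1-\tfrac{1}{\epsilon}\bigr)$; multiplying through by $\epsilon > 0$ and simplifying turns this into $|\mathcal{C}_{min}| + |\mathcal{C}_{max}| - 2 > \epsilon(|\mathcal{C}_{max}| - 2)$, which is precisely (\ref{eq_epsilon_constraint}) once $|\mathcal{C}_{max}| > 2$. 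The main obstacle, such as it is, will be verifying that each bound is extremized at the right cluster size and that the chain of inequalities is not lossy enough to miss the stated threshold; I would also check the degenerate configurations separately, since for $|\mathcal{C}_{max}| \leq 2$ (or singleton clusters) the closed form in (\ref{eq_epsilon_constraint}) is vacuous or automatically satisfied — there $\lambda_2^{(b)} \leq 1-\tfrac{1}{\epsilon} < 1 \leq \lambda_1^{(a)}$ directly.
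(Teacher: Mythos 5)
Your proposal is correct and follows essentially the same route as the paper's proof: a Rayleigh-quotient lower bound $\lambda_1^{(a)} \geq \frac{|\mathcal{C}_a|}{\epsilon} + 1 - \frac{1}{\epsilon}$ via the all-ones vector, the trace-plus-nonnegativity bound $\lambda_2^{(a)} \leq (|\mathcal{C}_a|-1)(1-\frac{1}{\epsilon})$, and the monotonicity-in-cluster-size comparison reducing everything to the threshold in Eq.~(\ref{eq_epsilon_constraint}). Your explicit treatment of the degenerate case $|\mathcal{C}_{max}| \leq 2$ is a small addition the paper omits, but otherwise the arguments coincide.
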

Before continuing the discussion of the encoder, we provide more insights about 
Theorem \ref{theorem_eig}. Larger $\frac{1}{\epsilon}$ means better 
representations since samples in the same cluster have high similarities. 
When $\epsilon \rightarrow 1$, data points of the same
cluster are projected into the identical representation.
Besides, Eq. \ref{eq_epsilon_constraint} demonstrates that the inequality 
becomes hard to conform to if there is a large gap between the size of the 
largest cluster and smallest one. 
Specifically speaking, when $|\mathcal{C}_{max}| \gg |\mathcal{C}_{min}|$,
$\frac{|\mathcal{C}_{min}|}{|\mathcal{C}_{max}| - 2} + 1 \rightarrow 1$, 
which results in a too strict restriction. 
Provided that $|\mathcal{C}_{min}| = |\mathcal{C}_{max}|$, 
Eq. \ref{eq_epsilon_constraint} becomes 
\begin{equation}
    \epsilon < 2 < \frac{1}{1 - \frac{2 c}{n}} + 1 .
\end{equation}
In other words, the condition of Theorem \ref{theorem_eig} will hold only if 
the similarity of two nodes in the same partition exceeds $0.5$ in the 
category-balanced case.

Based on the above discussion, we can process the output of the $L$-th 
layer via a normalization step. 
Formally, the final representation of embedding is defined as 
\begin{equation} \label{eq_z}
    \bm z_i = \frac{\bm h_i^{(L)}}{\|\bm h_i^{(L)}\|_2},
\end{equation}
where $\bm h_i^{(L)}$ is the $i$-th column of $H_L^T$. 
In summary, the encoder maps nodes into the non-negative part of 
a hypersphere nonlinearly. 

\paragraph{Decoder}
Decoder intends to reconstruct the adjacency $A$ from the embedding and 
the reconstruction error is the objective of GAE. 
The decoder of GAE utilizes a sigmoid function to map 
$(-\infty, +\infty)$ into a probability space.
According to Eq. (\ref{eq_ReLU}) and (\ref{eq_z}), 
it is easy to verify that $\bm z_i^T \bm z_j \in [0, 1]$. 
In other words, the sigmoid function is no more required. 
The reconstructed adjacency can be defined as 
\begin{equation} \label{eq_reconstruction_A}
    \hat A = Z Z^T .
\end{equation}
The objective to reconstruct the adjacency can be formulated as 
\begin{equation}
    \mathcal{J}_r = KL(A || \hat A) .
\end{equation}

\paragraph{Embed Clustering Model as Another Decoder}
Instead of separating clustering and embedding learning, EGAE 
attempts to obtain appropriate representations for the specific model 
defined in problem (\ref{eq_relaxed_k_means}) via optimizing 
it and the network simultaneously. 
Through replacing $\hat G$ in $\mathcal{\hat J}_c$ (defined in 
Eq. (\ref{eq_hat_J_c})) by $P$, the objective of relaxed $k$-means is 
formulated as 
\begin{equation}
    \mathcal{J}_c = {\rm tr}(Z Z^T) - {\rm tr}(P^T Z Z^T P) ,
\end{equation}
and the loss of EGAE is defined as 
\begin{equation} \label{obj}
    \min \limits_{W_i, P^T P = I} \mathcal{J} = \mathcal{J}_r + \alpha \mathcal{J}_c, 
\end{equation}
where $\alpha$ denotes a tradeoff hyper-parameter.
It should be emphasized that $\mathcal{J}_c$ can be employed as an 
individual decoder. 
Since one core idea of unsupervised neural networks is to define 
a loss via reconstruction for training, $\mathcal{J}_c$ provides 
a novel approach to train neural networks unsupervisedly. 
On the one hand, $\mathcal{J}_c$ induces 
the network to produce representations that are preferable for 
the relaxed $k$-means. 
On the other hand, with fixed $P$ and $F$, $\mathcal{J}_c$ aims to 
produce embedding to approach $F P^T$. 
Therefore, it can be viewed as a decoder as well.
Overall, EGAE can be also viewed as a variant of GAE with dual decoders.

Due to the constrained problem in Eq. (\ref{obj}), the simple gradient 
descent can not be applied directly.  
As only $P$ has a constraint,
the alternative method, which consists of gradient descent for
$\{W_i\}_{i=1}^L$ and closed-form solution for $P$, is employed.
Details of the optimization are summarized in Algorithm \ref{alg}.

\textbf{Remark 1}: 
As we all know, one severe problem that traditional auto-encoders suffer 
from is that the representations may be inappropriate for specific tasks 
such as clustering, classification, \text{etc}.
A reason is the uncertain metric on representations. 
In other words, we cannot obtain representations under a specific distance 
metric with the symmetric architecture, which is widely used in auto-encoders.
Compared with AE, 
GAE utilizes the inner-products of deep features to 
reconstruct the graph via an asymmetric structure. 
As we focus on graph data in this paper, GAE is a natural choice to fit 
our theoretical analyses.

\textbf{Remark 2}: Another merit of the normalization defined in 
Eq. (\ref{eq_z}) is to unify the inner-products and Euclidean distance. 
More formally,
\begin{equation}
    \|\bm z_i - \bm z_j \|_2^2 = \|\bm z_i\|_2^2 + \|\bm z_j\|_2^2 - 2 \bm z_i^T \bm z_j = 2 - 2 \bm z_i^T \bm z_j.
\end{equation}
Therefore, the Euclidean distance of two nodes decreases monotonously with 
the increasing of inner-products. 
This property demonstrates the favorable explainability of representations 
generated by EGAE. 
Besides node clustering, the embedding is thereby suitable for other tasks 
such as classification, link prediction, \textit{etc}.

\begin{algorithm}[t]
    \centering
    \caption{Algorithm to optimize problem (\ref{obj}).}
    \label{alg}
    \begin{algorithmic}
        \REQUIRE Tradeoff parameters $\alpha$ and the adjacency matrix $A$. 
        \REPEAT 
            \REPEAT 
                \STATE Calculate the gradients of Eq. (\ref{obj}) w.r.t. $\{W_i\}_{i=1}^l$.
                \STATE Update $\{W_i\}_{i=1}^L$ by the gradient descent.
            \UNTIL{convergence or exceeding maximum iterations.}
            \STATE Compute $P$ and obtain partition via Algorithm \ref{alg_relaxed_k_means}. 
        \UNTIL{convergence or exceeding maximum outer-iterations.}
        \ENSURE Clustering assignments, indicator matrix $G$ and parameters $\{W_i\}_{i=1}^l$.
    \end{algorithmic}
\end{algorithm}

\subsection{Intuitive Illustration}
To understand the motivation of EGAE, we give a visual 
interpretation via experiment on a toy dataset. 
Roughly speaking, EGAE can map data into a hypersphere non-linearly.
Similarities of data points are measured by inner-products 
in the hypersphere. 
Fig. \ref{figure_two_rings} illustrates the effect of EGAE 
on 2-rings data. 
In the experiment, we employ a sampling method to generate the graph.
Specifically, we connect two nodes that belong to an identical cluster 
with 90\% probability, and there is no link between two different clusters.
From the figure, we realize that with the number of iterations 
increasing, 1) connected samples become more cohesive; 
2) samples of different clusters go as orthogonal as possible. 
In other words, we can obtain orthogonal embedding when the prior 
adjacency contains sufficient information.

\begin{figure*}
    \subfigure[Raw features]{
        \includegraphics[width=.3\linewidth]{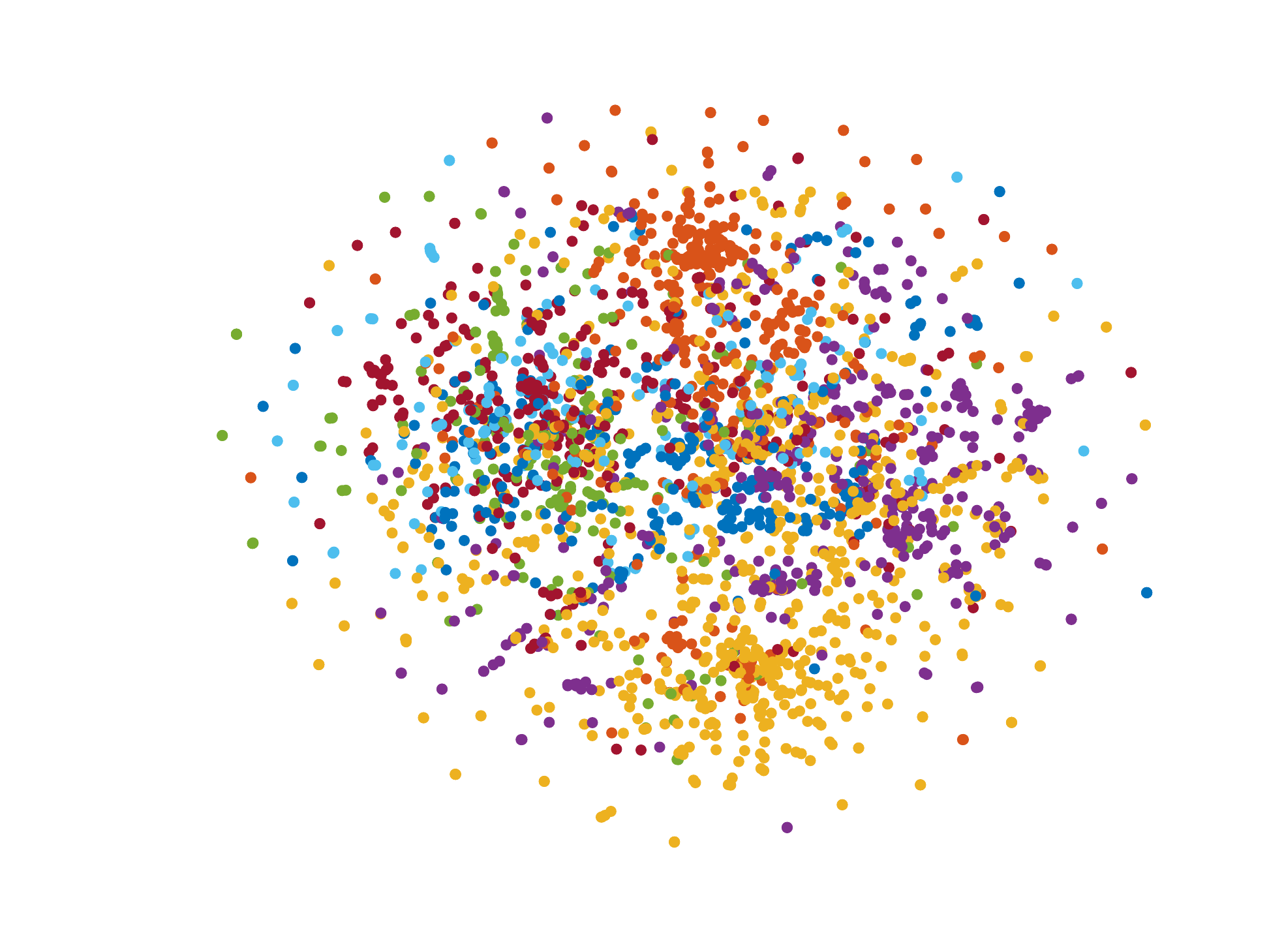}
    }
    \subfigure[Embedding of EGAE ($\alpha = 0$)]{
        \includegraphics[width=.3\linewidth]{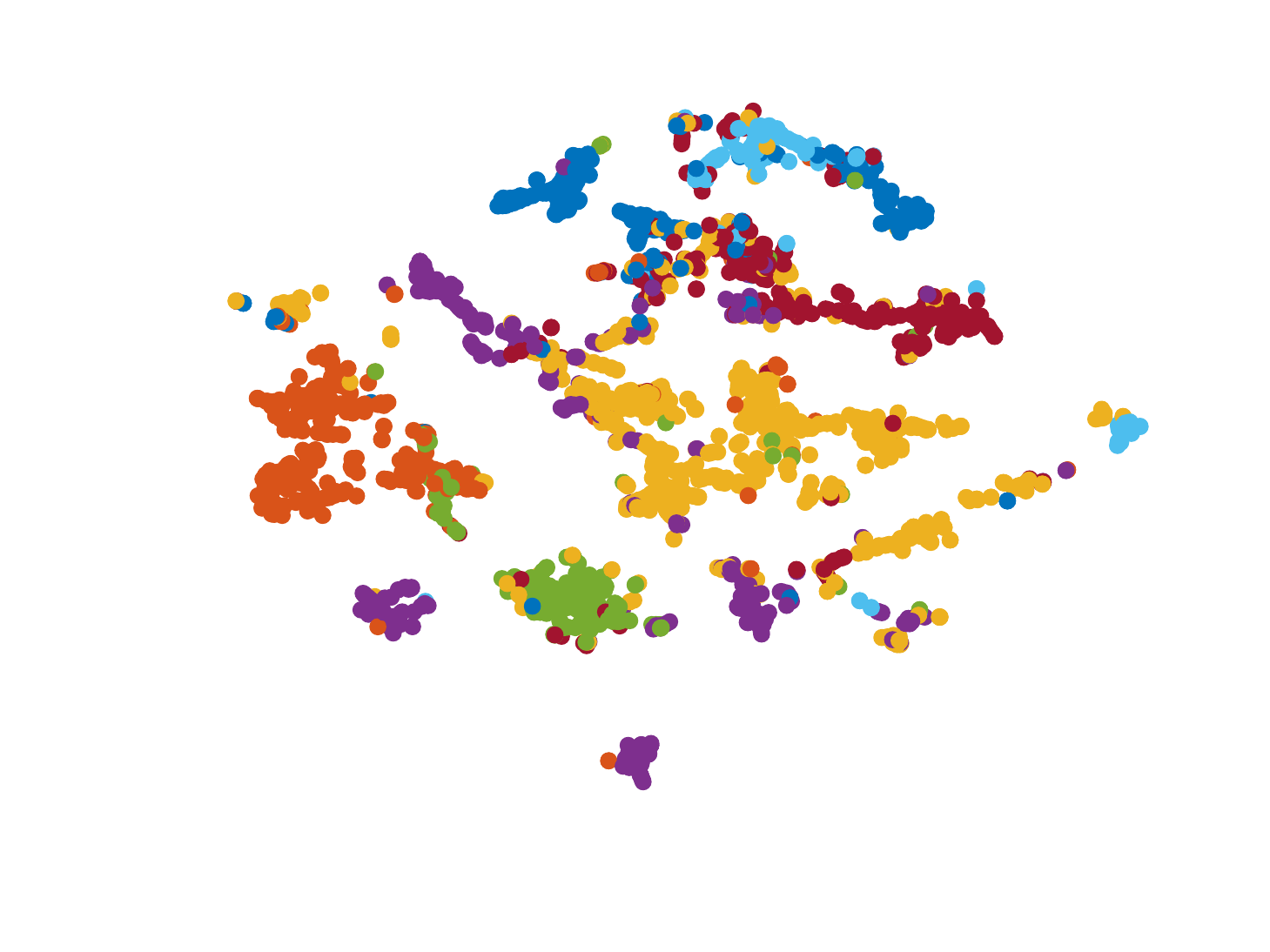}
    }
    \subfigure[Embedding of EGAE]{
        \includegraphics[width=.3\linewidth]{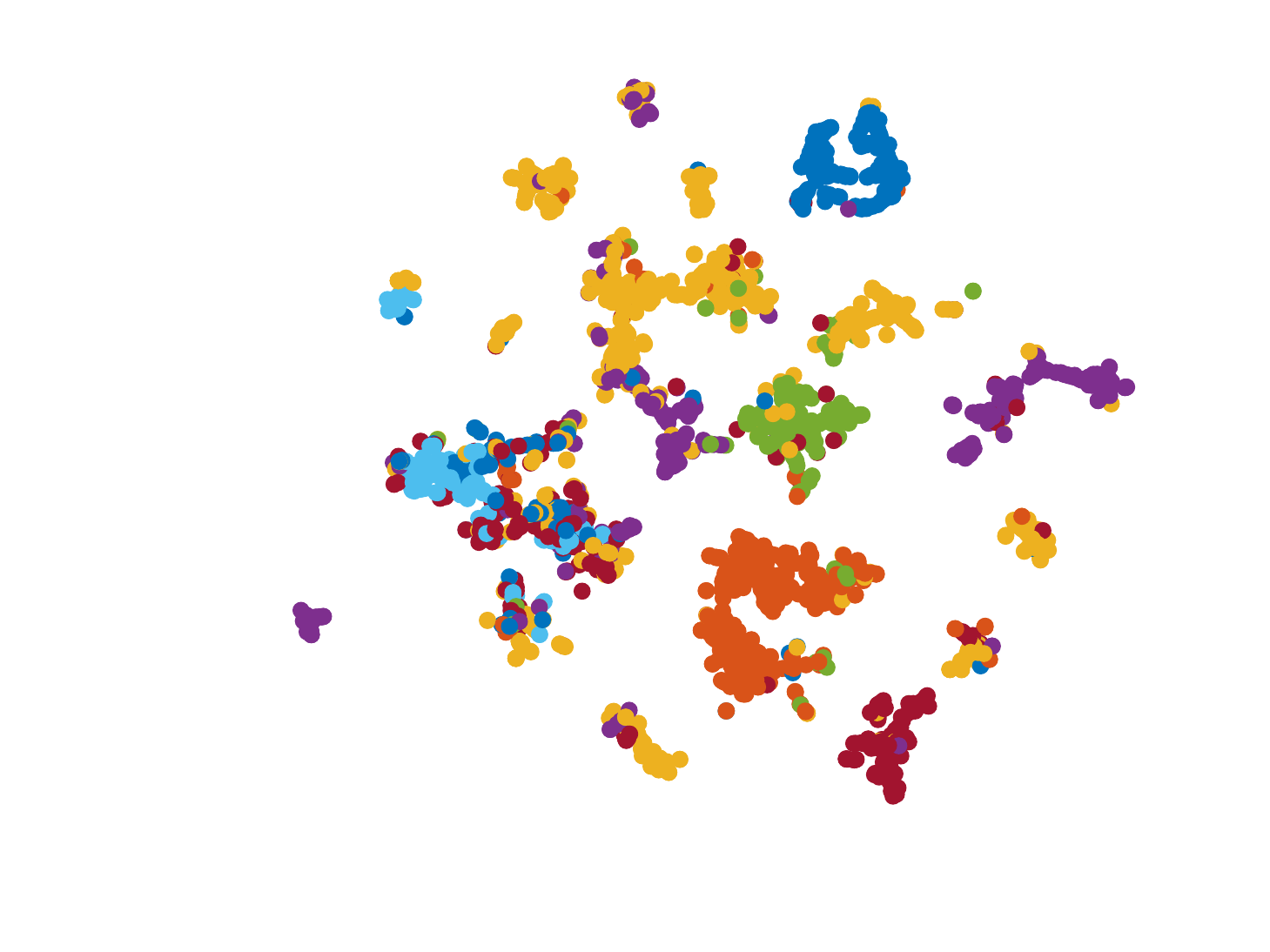}
    }

    \caption{t-SNE visualization of EGAE on Cora.}
    \label{figure_illustration_cora}
\end{figure*}

\subsection{Extension: EGAE with Adjacency Sharing} 

As is shown in Eq. (\ref{eq_reconstruction_A}), 
$Z Z^T$ is the reconstructed adjacency. 
Meanwhile, $Z Z^T$ can be viewed as a similarity matrix in relaxd $k$-means 
according to Eq. (\ref{eq_relaxed_k_means}). 
If the quality of reconstruction is poor, then the performance of relaxed 
$k$-means may be affected severely. 
To alleviate the bias caused by incomplete training, a considerable 
manner is to employ the prior information provided by the graph to 
rectify the clustering result. 

A substantial clustering model that can explore clustering information of 
graphs is the spectral clustering. 
Let $L = D - A$ be the unnormalized Laplacian matrix. 
Then the rectified model that incorporates the prior graph information 
can be defined as 
\begin{equation}
    \label{eq_joint_clustering}
    \min \limits_{F, P^T P = I} \|X^T - F P^T\|_F^2 + \beta {\rm tr}(P^T L P) ,
\end{equation}
where $\beta$ is the tradeoff coefficient to constrol the importance of 
the prior knowledge. Note that the latter term is the spectral clustering 
with ratio cut. To solve the above problem, it is equivalent to 
\begin{equation}
    \min \limits_{P^T P = I} {\rm tr}(P^T (\beta L - X X^T) P) .
\end{equation}
Therefore, the optimal solution of problem (\ref{eq_joint_clustering}) is 
given by eigenvalue decomposition of $XX^T + \beta L$.
Accordingly, the loss of this extension of EGAE is formulated as 
\begin{equation}
    \min \limits_{W_i, P^T P = I} \mathcal{J}_r 
    + \alpha (\|Z^T - F P^T\|_F^2 + \beta {\rm tr}(P^T L P)) .
\end{equation}
In particular, the adjacency matrix is shared by both GAE and the clustering 
part. This mechanism helps EGAE to avoid overfitting by the given graph 
structure. 
In this paper, we only conduct experiments on EGAE without this extension 
since it will slow down the optimization. The details can be found in 
the next subsection.

\subsection{Computational Complexity}
Let $n_e$ and $d_i$ be number of edges in the graph and dimension of 
the $i$-th layer's output, respectively. 
To keep notations uncluttered, 
$d_0 = d$ is used as the dimension of raw features.
Since computation of gradient for each node needs all connected 
samples in each propagation, 
every step to calculate the gradient of $\mathcal{J}_r$ w.r.t. 
$\{W_i\}_{i=1}^L$ requires 
$O(n_e \sum_{i=0}^k (d_i d_{i+1}))$ time. 
To compute $\nabla_{W_i} \mathcal{J}_c$, extra $O(n d_L c)$ time is needed.
The optimization of clustering model needs $O(n d_L c)$ to compute 
the $c$ leading singular of $Z$. 
It should be pointed out that if the extended model is employed, 
then the optimization of clustering model requires $O(n^2 c)$ at least 
which extremely slows down the optimization.

Let $T_{i}$ be the inner-iterations to update parameters of the network 
with fixed $P$ and $T_{o}$ be the outer-iterations.
Then, the time complexity of EGAE is 
$O( T_o(T_i (n d_L c + n_e \sum_{i=0}^k (d_i d_{i+1})) + n d_L c))$. 
Apparently, the embedded clustering model does not increase the complexity 
of GAE as $c$ is usually small compared with $d_i$.


\section{Proofs} \label{section_analysis}

In this section, we will provide proofs of the mentioned theorem 
respectively.

\subsection{Proof of Theorem \ref{theorem_ideal}}
To prove Theorem \ref{theorem_ideal}, the following lemma is required.
\begin{myLemma} \label{lemma}
    For any positive and symmetric matrix $K$, the most principle component 
    $\bm \gamma$ satisfies that all elements are not zero and have the 
    same sign. More formally, 
    \begin{equation}
        \forall i, j, sign(\gamma_i) = sign(\gamma_j)
    \end{equation} 
\end{myLemma}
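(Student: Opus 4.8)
The plan is to recognize Lemma~\ref{lemma} as (essentially) the Perron--Frobenius statement for an entrywise positive symmetric matrix, and to prove it directly from the variational characterization of the top eigenvector rather than invoking the general theorem. Write $\lambda_1$ for the largest eigenvalue of $K$ and let $\bm\gamma$ be a unit-norm eigenvector, so that $\bm\gamma^T K \bm\gamma = \lambda_1 = \max_{\|\bm v\|_2 = 1} \bm v^T K \bm v$. First I would pass to the entrywise absolute value $|\bm\gamma|$ and observe that, because every $K_{ij} \ge 0$,
\[
|\bm\gamma|^T K |\bm\gamma| = \sum_{i,j} K_{ij}\,|\gamma_i|\,|\gamma_j| \ \ge\ \sum_{i,j} K_{ij}\,\gamma_i\,\gamma_j = \lambda_1 ,
\]
while $\||\bm\gamma|\|_2 = \|\bm\gamma\|_2 = 1$. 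Hence $|\bm\gamma|$ is itself a maximizer of the Rayleigh quotient and therefore also a top eigenvector: $K|\bm\gamma| = \lambda_1 |\bm\gamma|$.

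Next I would use strict positivity of the entries of $K$ to rule out zero components. Suppose $\gamma_i = 0$ for some $i$. Reading off the $i$-th coordinate of $K|\bm\gamma| = \lambda_1|\bm\gamma|$ gives $\sum_j K_{ij}|\gamma_j| = \lambda_1 \cdot 0 = 0$; since each $K_{ij} > 0$ and each $|\gamma_j| \ge 0$, this forces $|\gamma_j| = 0$ for all $j$, contradicting $\|\bm\gamma\|_2 = 1$. Thus $\gamma_i \ne 0$ for every $i$. For the common-sign claim, I would revisit the displayed inequality: its left side is $\le \lambda_1$ because $\lambda_1$ is the maximum of the Rayleigh quotient over unit vectors, and $\ge \lambda_1$ by the display, so it is an equality, giving $\sum_{i,j} K_{ij}\big(|\gamma_i|\,|\gamma_j| - \gamma_i\,\gamma_j\big) = 0$. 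Every summand is nonnegative and $K_{ij} > 0$, so $|\gamma_i|\,|\gamma_j| = \gamma_i\,\gamma_j$, i.e. $\gamma_i\gamma_j \ge 0$, for all $i,j$. Combined with $\gamma_i \ne 0$ this yields $\gamma_i\gamma_j > 0$, which is exactly $sign(\gamma_i) = sign(\gamma_j)$.

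The only real subtlety — and the point I would be careful to state — is the precise meaning of ``positive matrix'': the argument needs the entries of $K$ to be strictly positive, since the paper's notation reserves $K \ge 0$ for merely nonnegative matrices, for which the claim fails (e.g. a block-diagonal nonnegative matrix has a top eigenvector supported on one block). In the intended application $K$ is a diagonal block $Q^{(m)}$ of $ZZ^T$ whose entries are within-cluster inner products bounded below by $1/\epsilon > 0$, so strict positivity does hold. If one wants the eigenvector to be literally unique up to scaling (``the'' principal component), I would add a short remark: two all-positive unit eigenvectors $\bm\gamma, \bm\gamma'$ for $\lambda_1$ admit a nontrivial linear combination with a vanishing coordinate still lying in the $\lambda_1$-eigenspace, contradicting the no-zero-component conclusion; hence $\lambda_1$ is simple. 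No step here is computationally heavy — the whole proof is the two displayed manipulations above plus the coordinatewise reading of the eigenvector equation.
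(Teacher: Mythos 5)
Your proof is correct and takes essentially the same route as the paper's: the variational characterization of $\lambda_{1}$, the entrywise absolute-value trick to force a nonnegative maximizer, and a coordinatewise reading of the eigenvector equation (using strict positivity of the entries of $K$) to exclude zero components. The only real difference is that you obtain the common sign from the equality case of the Rayleigh-quotient bound rather than from the paper's strict-inequality contradiction, which incidentally is slightly more careful, since the paper's claimed strict inequality $\sum_{i,j}k_{ij}|\gamma_i\gamma_j| > \sum_{i,j}k_{ij}\gamma_i\gamma_j$ fails in the edge case where the only sign discrepancy involves a zero entry.
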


\begin{proof}
    According to the definition, we have 
    \begin{equation}
        \left \{
        \begin{array}{l}
            \bm \gamma = \arg \max \limits_{\bm x} \frac{\bm x^T K \bm x}{\bm x^T \bm x} \\
            \lambda_{max} = \max \limits_{\bm x} \frac{\bm x^T K \bm x}{\bm x^T \bm x}
        \end{array}
        \right.
    \end{equation}
    
    Suppose that $\exists i, j, sign(\gamma_i) \neq sign(\gamma_j)$. 
    We can construct $\bm {\hat \gamma} = |\bm \gamma|$. 
    Note that 
    \begin{equation}
        \bm x^T K \bm x = \sum \limits_{i, j} k_{ij} x_i x_j
    \end{equation}
    As $K > 0$, we have
    \begin{equation}
        \bm{\hat \gamma}^T K \bm{\hat \gamma} = \sum \limits_{i, j} k_{ij} |\gamma_i \gamma_j| > \sum \limits_{i, j} k_{ij} \gamma_i \gamma_j = \bm{\gamma}^T K \bm{\gamma}
    \end{equation}
    Moreover, $\bm{\hat \gamma}^T \bm{\hat \gamma} = \bm \gamma^T \bm \gamma$. Hence, we have
    \begin{equation}
        \frac{\bm{\hat \gamma}^T K \bm{\hat \gamma}}{\bm{\hat \gamma}^T \bm{\hat \gamma}} = \frac{\bm{\hat \gamma}^T K \bm{\hat \gamma}}{\bm{\gamma}^T \bm{\gamma}} > \frac{\bm{\gamma}^T K \bm{\gamma}}{\bm{\gamma}^T \bm{\gamma}}
    \end{equation}
    which leads to a contradiction. Therefore, $\bm \gamma \geq 0$ or $\bm \gamma \leq 0$. 
    
    Due to $K > 0$, ${\rm tr}(K) = \sum \limits_{i} k_{ii} > 0$. 
    Clearly, we have $\lambda_{max} > 0$. If $\gamma_k$ = 0, then we have
    \begin{equation}
        \bm \eta =  K \bm \gamma = \sum \limits_{i\neq k} \bm k_i \gamma_i
    \end{equation}
    Note that 
    \begin{equation}
        K \bm \gamma = \lambda_{max} \bm \gamma
    \end{equation}
    It is not hard to verify that $\eta_k = \lambda_{max} \gamma_k$ 
    if and only if $\bm \gamma = 0$, 
    which leads to a contradiction. 
    Therefore, $\bm \gamma < 0$ or $\bm \gamma > 0$.
    
    Hence, the lemma is proved.
\end{proof}

Here, we give the complete proof of Theorem \ref{theorem_ideal} based on 
the above lemma.

\begin{proof} [Proof of Theorem \ref{theorem_ideal}]
    If samples from different clusters are orthogonal and inner-products 
    of samples from the same cluster are positive, then we have 
\begin{equation}
    Q_z = Z Z^T = 
    \left [
    \begin{array}{l l l l}
        Q_z^{(1)} & & & \\
        & Q_z^{(2)} & & \\
        & & \ddots & \\
        & & & Q_z^{(c)}
    \end{array}
    \right ]
\end{equation}
where $Q_z^{(i)} = Z_i Z_i^T$ and $Z_i \in \mathbb R^{|\mathcal{C}_i| \times d }$ 
consists of samples from the $i$-th cluster. 
According to our assumption, we have $Q_z^{(i)} > 0$. 
With the help of Lemma \ref{lemma}, there exists $\bm \gamma_i$ 
which satisfies that 
\begin{equation}
    \left \{
    \begin{array}{l}
        \bm \gamma_i > 0 \\
        Q_z^{(i)} \bm \gamma_i = \lambda_{max}^{(i)} \bm \gamma_i
    \end{array}
    \right.
\end{equation}
Furthermore, 
\begin{equation}
    Q_z
    \left [
    \begin{array}{c}
        0 \\
        \bm \gamma_i \\
        0
    \end{array}
    \right ]
    = \lambda_{max}^{(i)} 
    \left [
    \begin{array}{c}
        0 \\
        \bm \gamma_i \\
        0
    \end{array}
    \right ] .
\end{equation}
Therefore, a valid relaxed $G_0$ can be given as 
\begin{equation}
    P_0 = 
    \left [
    \begin{array}{l l l l} 
        \frac{\bm \gamma_1}{\|\bm \gamma_1\|_2} & & & \\
        & \frac{\bm \gamma_2}{\|\bm \gamma_2\|_2} & & \\
        & & \ddots & \\
        & & & \frac{\bm \gamma_c}{\|\bm \gamma_c\|_2}
    \end{array}
    \right ] .
\end{equation}

However, given a orthogonal matrix $R$, 
any $P$ which satisfies $P = P_0 R$ is a valid solution. 
If we normalize rows of $P$ as $\hat P$, then we have 
\begin{equation}
    \hat P = 
    \left [
    \begin{array}{c}
        \pm \textbf{1}_{|\mathcal{C}_1|} \bm r^1 \\
        \pm \textbf{1}_{|\mathcal{C}_2|} \bm r^2 \\
        \vdots \\
        \pm \textbf{1}_{|\mathcal{C}_c|} \bm r^c
    \end{array}
    \right ] ,
\end{equation}
where $\bm r^i$ represents the $i$-th row vector of $R$.

Accordingly, it will obtain ideal partition if the $k$-means is performed 
on rows of $\hat P$.
\end{proof}

\begin{figure*}[t]
    \subfigure[Raw features]{
        \includegraphics[width=.3\linewidth]{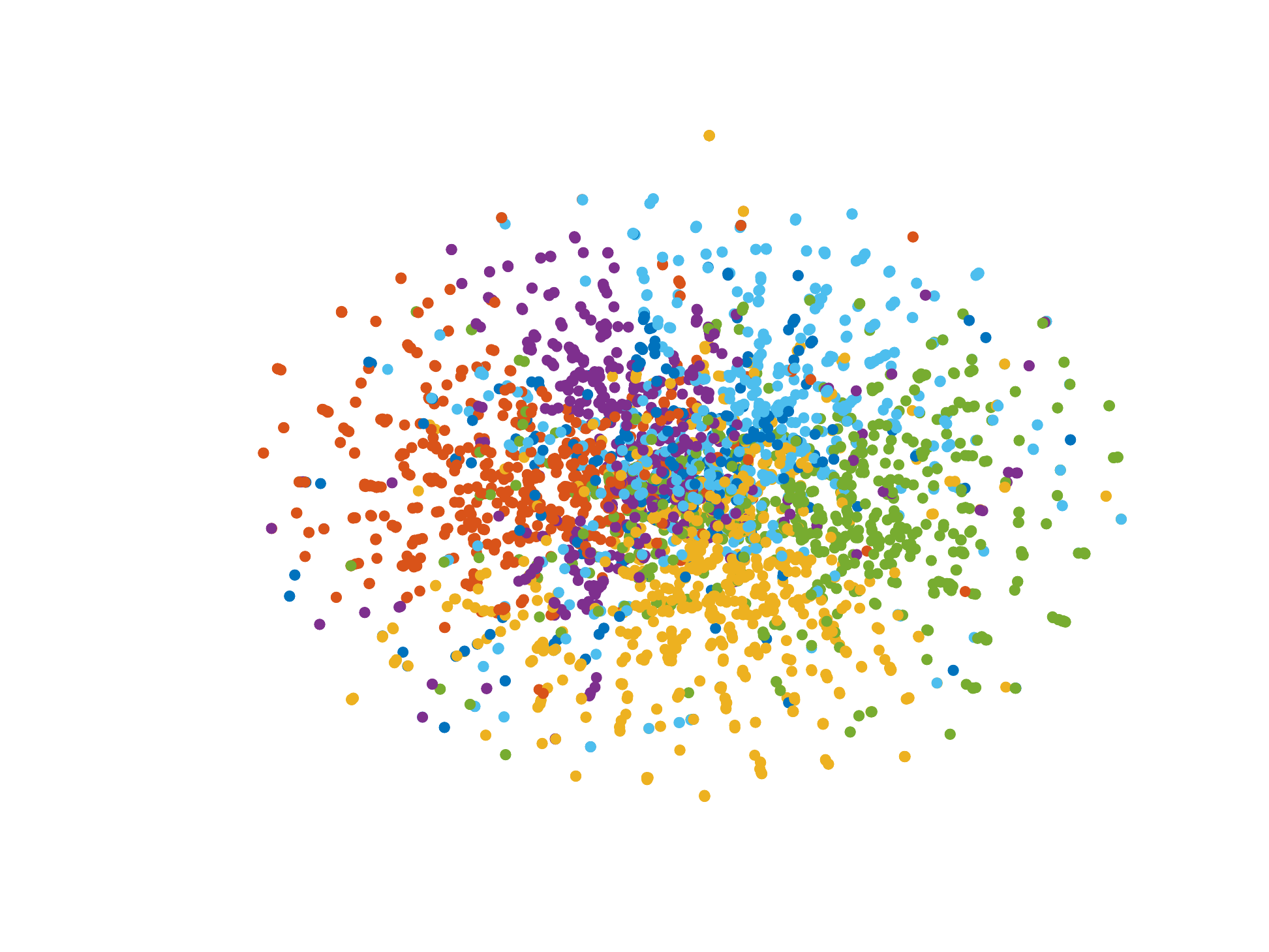}
    }
    \subfigure[Embedding of EGAE ($\alpha=0$)]{
        \includegraphics[width=.3\linewidth]{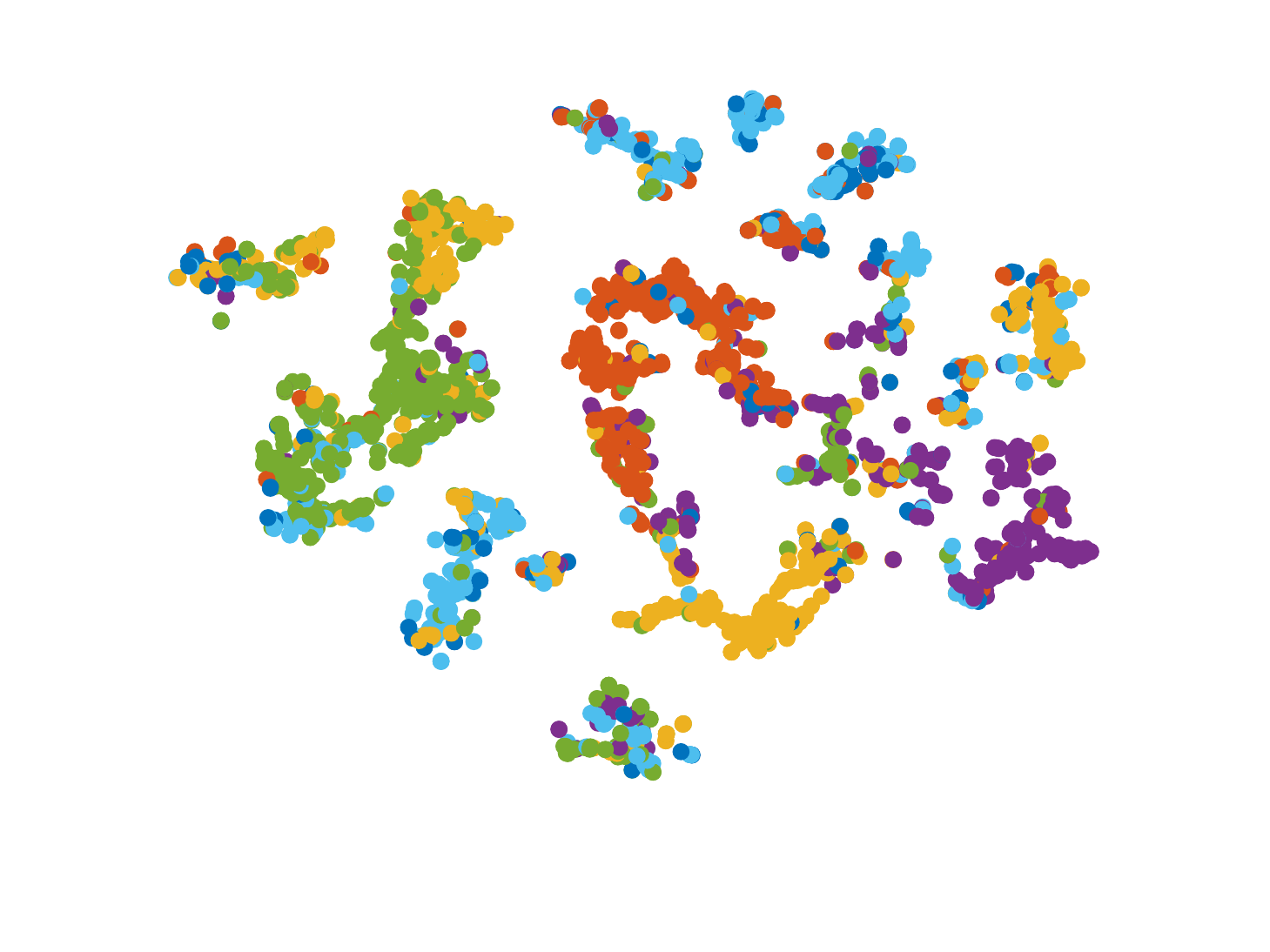}
    }
    \subfigure[Embedding of EGAE]{
        \includegraphics[width=.3\linewidth]{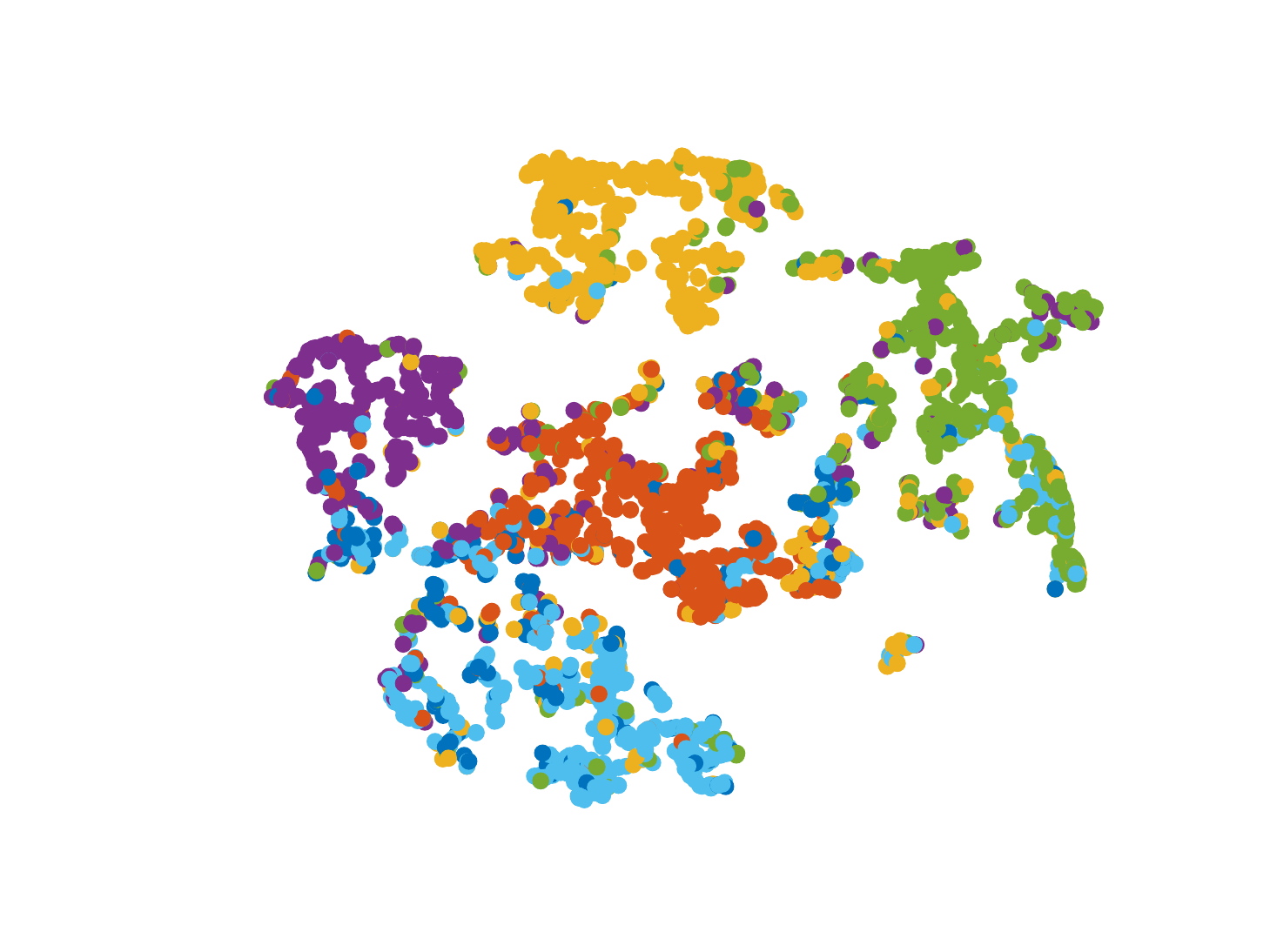}
    }

    \caption{t-SNE visualization of EGAE on Citeseer.}
    \label{figure_illustration_citeseer}
\end{figure*}

\subsection{Proof of Theorem \ref{theorem_connection}}
\begin{proof} 
    The objective function of normalized cut spectral clustering is given as
    \begin{equation}
        \label{objective_norm_cut_sc}
        \begin{split}
            & \min \limits_{P^T P = I} {\rm tr}(P^T \mathcal L P) \\
            \Rightarrow & \min \limits_{P^T P = I} {\rm tr}(I - P^T D^{-\frac{1}{2}} A D^{-\frac{1}{2}} P) \\
            \Rightarrow & \max \limits_{P^T P = I} {\rm tr}(P^T D^{-\frac{1}{2}} A D^{-\frac{1}{2}} P)
        \end{split}
    \end{equation}
    Note that $D = {\rm diag}(Z Z^T \textbf{1}) = n \times {\rm diag}(Z \bm \mu)$ where $\bm \mu = \frac{1}{n} \sum _{i=1}^n \bm z_i$. Let $H = I_n - \frac{1}{n} \textbf{1}_n \textbf{1}_n^T$ be a centralized matrix. Then the centralized data can be represented as $\hat Z = H Z = Z - \textbf{1}_n \bm \mu^T$.
    
    On the one hand, if $\bm \mu$ is perpendicular to the centralized data, then we have 
    \begin{equation}
        \hat Z \bm \mu = Z \bm \mu - \textbf{1}_n \bm \mu^T \bm \mu = Z\bm \mu - \|\bm \mu \|_2^2 \textbf{1}_n = 0
    \end{equation}
    which means 
    \begin{equation}
        D = n \times {\rm diag}(\|\bm \mu \|_2^2 \textbf{1}_n) = n\|\bm \mu\|_2^2 I_n
    \end{equation}
    Hence, problem (\ref{objective_norm_cut_sc}) has the same solution with problem 
    \begin{equation} \label{objective_relaxed_k_means}
        \max {\rm tr}(P^T Z Z^T P).
    \end{equation}
    On the other hand, if problem (\ref{objective_relaxed_k_means}) is equivalent  to problem (\ref{objective_norm_cut_sc}), $D = k I_n$. Therefore, we have
    \begin{equation}
        H Z^T Z \textbf{1}_n = H Z^T \cdot n \bm \mu = 0
    \end{equation}
    Accordingly, ${\rm rank}(H^T Z) < d$ which means that data points lie in an affine space, and the centralized data is perpendicular to $\bm \mu.$
    
    Hence, the theorem is proved.
\end{proof}

\begin{figure*}[t]
    \subfigure[Raw features]{
        \includegraphics[width=.3\linewidth]{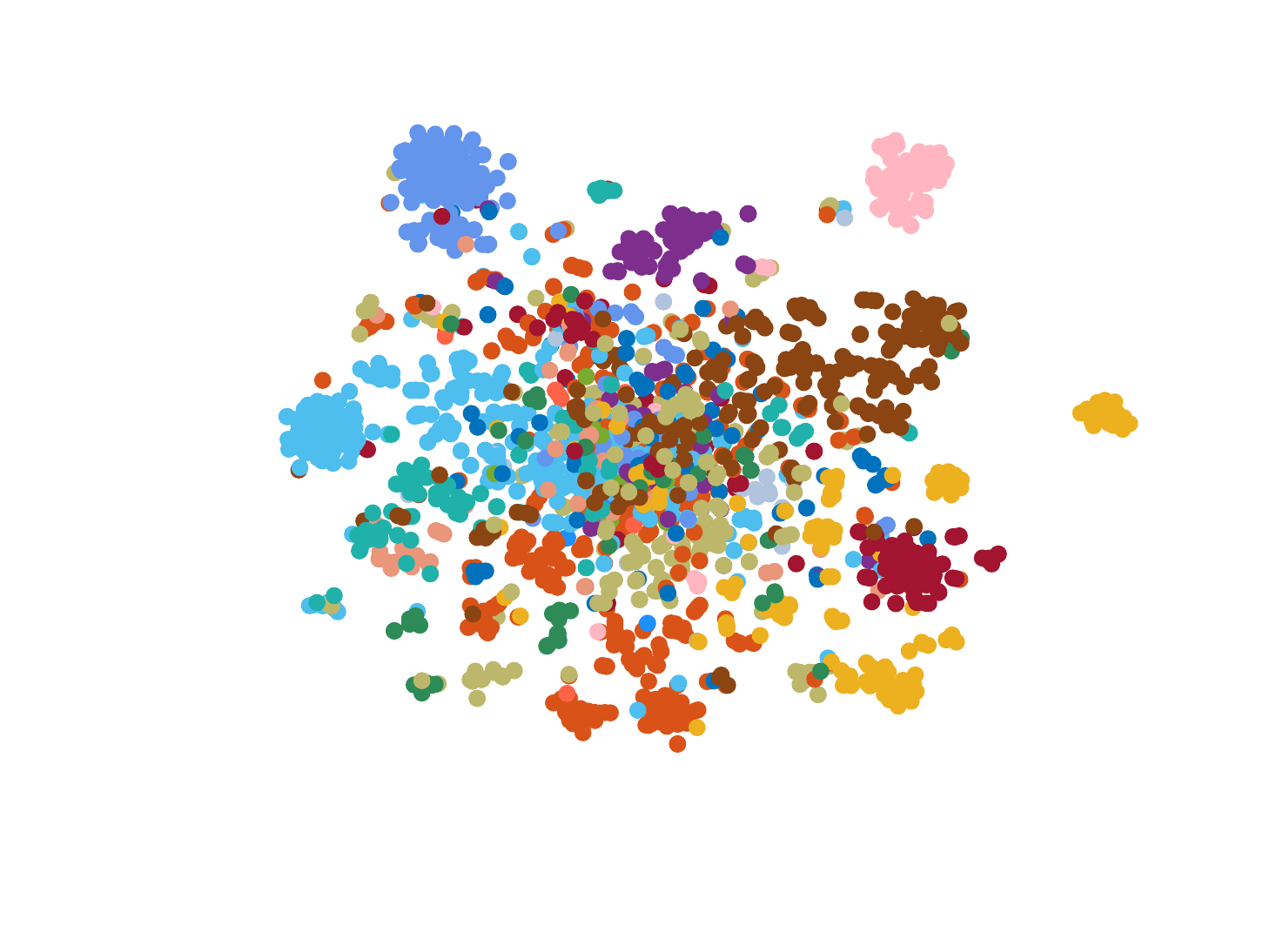}
    }
    \subfigure[Embedding of EGAE ($\alpha=0$)]{
        \includegraphics[width=.3\linewidth]{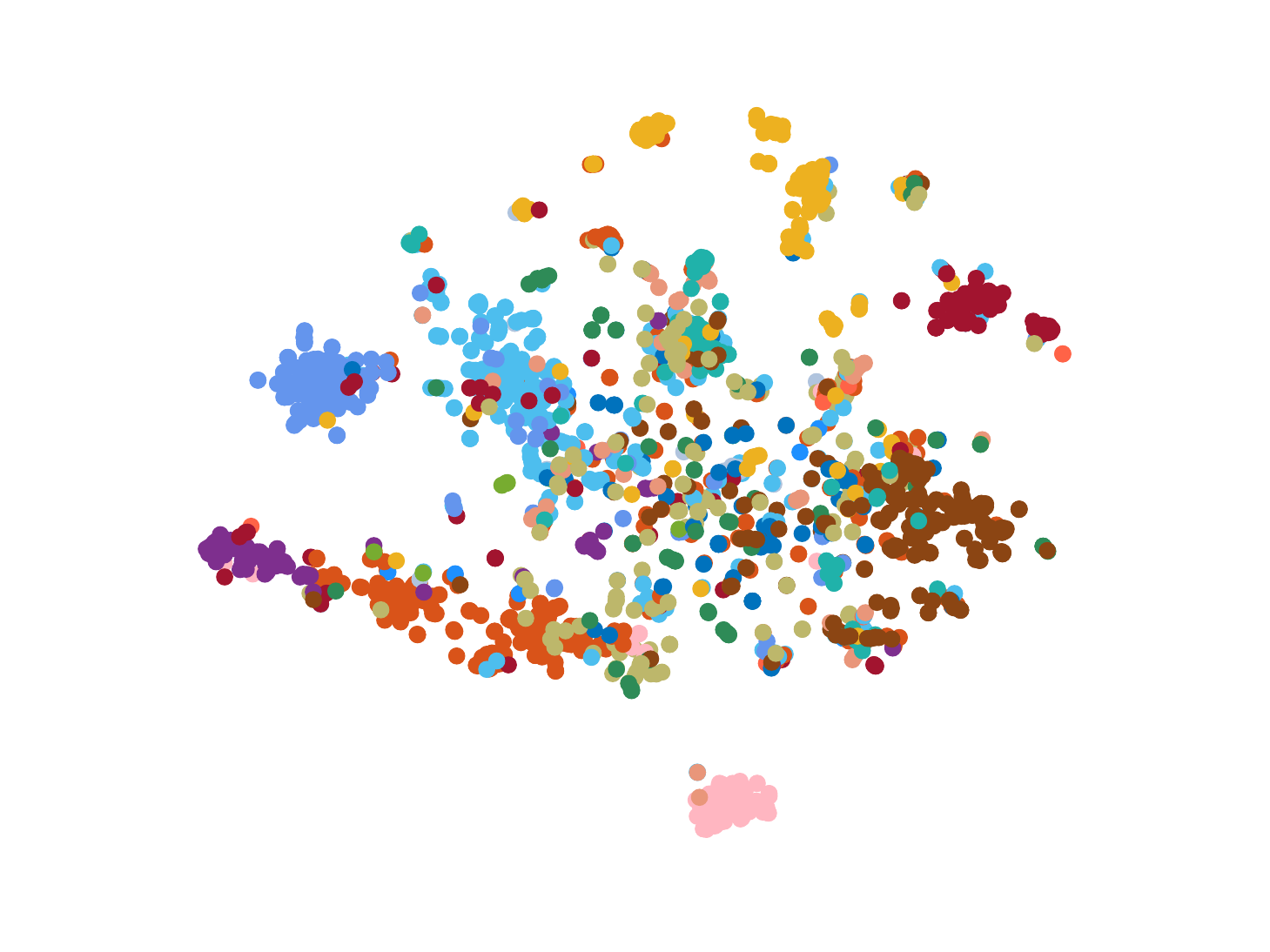}
    }
    \subfigure[Embedding of EGAE]{
        \includegraphics[width=.3\linewidth]{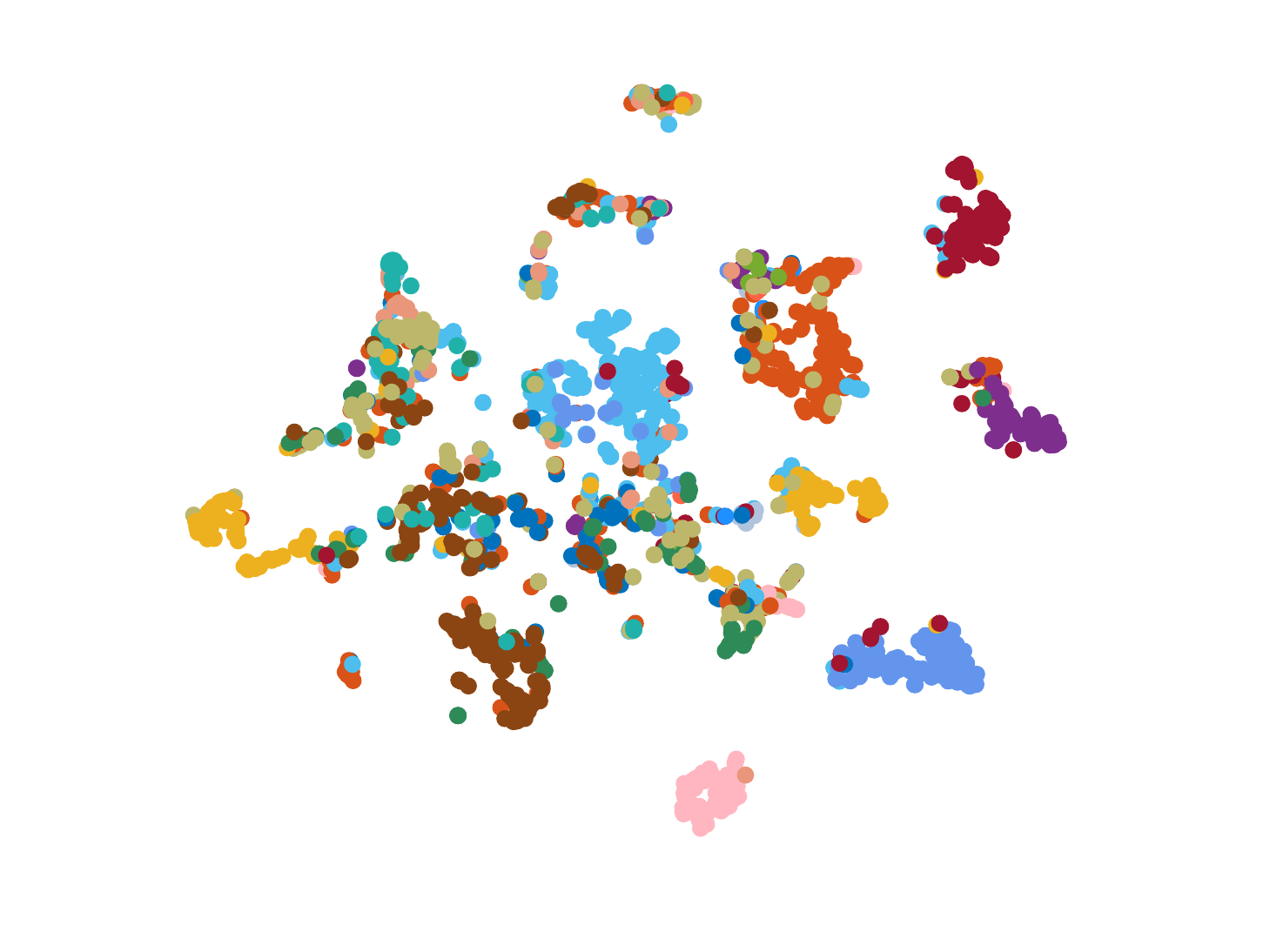}
    }
    \caption{t-SNE visualization of EGAE on Wiki.}
    \label{figure_illustration_wiki}
\end{figure*}

\subsection{Proof of Theorem \ref{theorem_eig}}

\begin{proof}
    Since $\|\bm z_i\|_2 = 1$, $(Q_z)_{ii} = 1$ always holds. 
    For each $Q^{(a)}$, we have 
    \begin{equation} \label{eq_trace_eig}
        \sum \limits_{i} \lambda^{(a)}_i = {\rm tr}(Q_z^{(a)}) = | \mathcal{C}_a |.
    \end{equation}
    Define 
    \begin{equation}
        \bm \gamma = \frac{1}{\sqrt{|\mathcal{C}_a|}} \textbf{1}_{|\mathcal{C}_a|}, 
    \end{equation}
    and we have 
    \begin{equation}
        \begin{split}
        \frac{\bm \gamma^T Q_z^{(a)} \bm \gamma}{\bm \gamma^T \bm \gamma} 
        = & \frac{1}{|\mathcal{C}_a|} (\frac{|\mathcal{C}_a|^2}{\epsilon} + |\mathcal{C}_a|(1 - \frac{1}{\epsilon})) \\
        = & \frac{|\mathcal{C}_a|}{\epsilon} + (1 - \frac{1}{\epsilon}).
        \end{split}
    \end{equation}
    The above equation indicates that 
    \begin{equation}
        \lambda_1^{(a)} \geq \frac{|\mathcal{C}_a|}{\epsilon} + (1 - \frac{1}{\epsilon}).
    \end{equation}
    Combine with Eq. (\ref{eq_trace_eig}), 
    \begin{equation}
        \lambda_2^{(a)} \leq (|\mathcal{C}_a| - 1) (1 - \frac{1}{\epsilon}).
    \end{equation}
    Here we get an upper bound for $\lambda_2^{(a)}$ and lower bound for $\lambda_1^{(a)}$. 
    To simplify the discussion, let 
    \begin{equation}
        \left \{
        \begin{array}{l}
            \mathcal{B}_{\epsilon}(|\mathcal{C}_a|) = \frac{|\mathcal{C}_a|}{\epsilon} + (1 - \frac{1}{\epsilon}) , \\
            \mathcal{U}_{\epsilon}(|\mathcal{C}_a|) = (|\mathcal{C}_a| - 1) (1 - \frac{1}{\epsilon}) .
        \end{array}
        \right .
    \end{equation}
    It should be pointed out that both $\mathcal{B}_{\epsilon}(|\mathcal{C}_a|)$
    and $\mathcal{U}_{\epsilon}(|\mathcal{C}_a|)$ increase with 
    $|\mathcal{C}_a|$ monotonously. 
    If the following inequality, 
    \begin{equation}
        1 \leq \epsilon < \frac{|\mathcal{C}_{min}|}{|\mathcal{C}_{max}| - 2} + 1 ,
    \end{equation}
    holds, then we have the following derivation,
    \begin{equation}
        \begin{split}
            & (\epsilon - 1) (|\mathcal{C}_{max}| - 2) < |\mathcal{C}_{min}| \\
            \Leftrightarrow ~ & (1 - \frac{1}{\epsilon}) (|\mathcal{C}_{max}| - 2) < \frac{|\mathcal{C}_{min}|}{\epsilon} \\
            \Leftrightarrow ~ & (1 - \frac{1}{\epsilon}) (|\mathcal{C}_{max}| - 1) < \frac{|\mathcal{C}_{min}|}{\epsilon} + (1 - \frac{1}{\epsilon}) \\
            \Leftrightarrow ~ & \mathcal{U}_{\epsilon}(|\mathcal{C}_{max}|) < \mathcal{B}_{\epsilon}(|\mathcal{C}_{min}|) .
        \end{split}
    \end{equation}
    Clearly, we have 
    \begin{equation}
        \begin{split}
            \lambda_1^{(a)} & \geq \mathcal{B}_{\epsilon}(|\mathcal{C}_a|) \geq \mathcal{B}_{\epsilon}(|\mathcal{C}_{min}|) \\
            & > \mathcal{U}_{\epsilon}(|\mathcal{C}_{max}|) \geq \mathcal{U}_{\epsilon}(|\mathcal{C}_{b}|) \geq \lambda_2^{(b)} .
        \end{split}
    \end{equation}
    Hence, the theorem is proved.
\end{proof}

\section{Experiment}
In this section, we elaborate on the crucial details of experiments 
and clustering results of EGAE. 
To be more intuitive, the t-SNE \cite{t-SNE} visualization is shown 
in Fig. \ref{figure_illustration_cora}, 
\ref{figure_illustration_citeseer}, and \ref{figure_illustration_wiki}. 

\begin{table}[t]
    \centering
    \caption{Concrete Information of Datasets}
    \label{table_datasets}
    \normalsize
    \begin{tabular}{l c c c c}
        \hline
        
        \hline
        Dataset & \# Nodes & \# Links & \# Features & \# Classes \\
        \hline
        \hline
        Cora & 2,708 & 5,429 & 1,433 & 7 \\
        Citeseer & 3,312 & 4,732 & 3,703 & 6 \\
        Wiki & 2,405 & 17,981 & 4,973 & 17 \\
        \hline

        \hline
    \end{tabular}
\end{table}

\begin{table*}[t]
    \centering
    \setlength{\tabcolsep}{4mm}
    \setlength{\belowcaptionskip}{1mm}
    \caption{Clustering Results (\%)}
    \label{table_graph_results}
    \normalsize
    \begin{tabular}{l c c c c c c c c c}
        \hline
        
        \hline
        \multirow{2}{*}{Methods} & \multicolumn{3}{c}{Cora} & \multicolumn{3}{c}{Citeseer} & \multicolumn{3}{c}{Wiki} \\
		 & ACC & NMI & ARI & ACC & NMI & ARI & ACC & NMI & ARI \\
        \hline
        \hline
        $k$-means & 49.18 & 32.05 & 22.81 & 53.97 & 30.50 & 27.81 & 40.43 & 42.91 & 15.03 \\
        SC \cite{sc2} & 36.72 & 12.67 & 3.11 & 23.89 & 5.57 & 1.00 & 22.04 & 18.17 & 1.46 \\
        Graph-Encoder \cite{graphencoder} & 32.49 & 10.93 & 0.55 & 22.52 & 3.30 & 1.00 & 20.67 & 12.07 & 0.49 \\
        DeepWalk \cite{deep_walk} & 48.40 & 32.70 & 24.27 & 33.65 & 8.78 & 9.22 & 38.46 & 32.38 & 17.03 \\
        DNGR \cite{dngr} & 41.91 & 31.84 & 14.22  & 32.59 & 18.02 & 4.29 & 37.58 & 35.85 & 17.97\\
        TADW \cite{TADW} & 56.03 & 44.11 & 33.20 & 45.48 & 29.14 & 22.81 & 30.96 & 27.13 & 4.54 \\
        \hline
        GAE \cite{gae} & 59.61 & 42.89 & 34.83 & 40.84 & 17.55 & 18.72 & 32.85 & 29.02 & 7.80 \\
        ARGE \cite{agae} & \underline{64.00} & 44.90 & 35.20 & \underline{57.30} & \underline{35.00} & 34.10 &  38.05 & 34.45 & 11.22 \\
        ARVGE \cite{agae} & 63.80 & \underline{45.00} & 37.40 & 54.40 & 26.10 & 24.50 & 38.67 & 33.88 & 10.69\\
        AGC \cite{AGC} & 68.92 & 53.68 & --- & 67.00 & 41.13 & --- & 47.65 & 45.28 & --- \\
        \hline
        \textbf{EGAE ($\alpha = 0$)} & 69.31 & 51.16  & 44.81 & 58.38 & 33.46 & 30.89 & 44.47 & 41.92 & 25.92 \\ 
        \textbf{EGAE} & \textbf{72.42} & \textbf{53.96} & \textbf{47.22} & \textbf{67.42} & \textbf{41.18} & \textbf{43.18} & \textbf{51.52} & \textbf{48.03} & \textbf{33.07} \\
        \hline

        \hline
    \end{tabular}
\end{table*}


\subsection{Benchmark Datasets} 
To verify the effectiveness of EGAE, experiments are conducted on 
3 graph datasets, including
\textit{Cora} \footnote{\url{linqs.soe.ucsc.edu/data} \label{url_cora}} \cite{cora}, 
\textit{Citeseer} \textsuperscript{\ref{url_cora}} \cite{cora}, 
and \textit{Wiki} \footnote{\url{github.com/thunlp/TADW}} \cite{TADW}.
These 3 datasets are citation networks 
where each node represents a publication 
and link denote a citation between two publications. 
Features of nodes are usually word vectors which may consist of keywords 
of papers (or pages).
Cora, which contains 7 categories of publications, has 2708 nodes and 5429 links. 
Citeseer, which contains 6 categories of publications, has 3312 nodes and 4732 links. 
Wiki has 2405 nodes and 4973 links while all nodes come from 17 classes.

\subsection{Baseline Methods and Evaluation Metrics} 
Totally ten algorithms are compared in the experiments: 
\textit{$k$-means}, Spectral Clustering (\textit{SC}), 
\textit{Graph-Encoder} \cite{graphencoder}, 
\textit{Deep Walk} \cite{deep_walk}, \textit{DNGR} \cite{dngr}, 
\textit{TADW} \cite{TADW}, \textit{GAE} (\textit{GAE}) \cite{gae}, 
Adversarial Regularized Graph Autoencoder (\textit{ARGE}) \cite{agae},
Adversarial Regularized Variational Graph Autoencoder \textit{ARVGE} (\textit{ARVGE}) \cite{agae}, 
and Adaptive Graph Convolution (\textit{AGC}) \cite{AGC}.
The experimental results of Graph-Encoder, Deep Walk, DNGR, TADW, ARGE, ARVGE, and AGC 
are reported from original papers.
In this paper, totally 3 metrics are employed to testify the performance 
of various models, including the clustering accuracy (\textit{ACC}), 
normalized mutual information (\textit{NMI}), 
and adjusted rand index (\textit{ARI}).

\subsection{Experimental Settings} 
In our experiments, the encoder is a two-layer GCN. 
Both activation functions of the two layers are ReLU. 
To have a stable training process, we employ a LASSO regularization for 
parameters of neural networks, and the corresponding tradeoff coefficient 
is set as $10^{-3}$ for Cora and Citeseer and $10^{-4}$ for Wiki.
In EGAE, $\alpha$ is an important hyper-parameter that are searched from 
$\{10^{-2}, 10^{-1}, 10^{0}, 10^{1}, 10^{2}, 10^{3}, 10^{4}, 10^{5}\}$.
To avoid the biased indicator matrix $P$ misleading the training of neural 
networks, we employ EGAE with $\alpha = 0$ as the pretrain for EGAE with a 
non-zero $\alpha$. 
The learning rate of EGAE with $\alpha = 0$ is set as $10^-3$ and the 
maximum iteration for training is set as $200$.
After pretraining, the learning rate of fine tuning is set as $10^{-4}$. 
The maximum iteration to update $P$ is set as $30$ and the maximum inner 
iteration to update the neural network is set as $5$.
For Cora, Citeseer, and Wiki, the encoder is composed of 256-neurons 
hidden layer and 128-neurons embedding layer. 
Although the number of clusters in all datasets is not too large 
and the embedding dimension just needs to equal with $c$ in theory, 
too small embedding dimension may lead to the slow convergence and
difficulty to train EGAE. 
Accordingly, the embedding layer is set as a 128-neuron graph 
convolution layer for all datasets. 
To test the effectiveness of the dual \textit{decoders} EGAE, 
we also report the performance of EGAE with $\alpha = 0$, the pretrain model.
When $\alpha = 0$, the model only has a unique decoder like other GAE-based 
models. 
Since clustering results of most compared methods and EGAE depend on $k$-means,
all methods are performed 10 times and the means are reported.  
All codes are implemented by torch-1.3.1 on a Win 10 PC with an NVIDIA
GeForce GTX 1660 GPU.

\subsection{Experimental Results} 
The clustering results of Cora, Citeseer, and Wiki are summarized in Table \ref{table_graph_results}. 
The best results are highlighted by boldface. 
From this table, we get some conclusions as follows:
\begin{itemize}
    \item On all datasets, GAE and its extensions outperform other 
            network embedding models, especially two traditional clustering models 
            $k$-means and spectral clustering. 
            Specifically speaking, GAE increases ACC by more than 10 percent 
            compared with Deep Walk. 
    \item Similar to traditional auto-encoder, GAE suffers from overfitting. 
            ARGE and ARVGE absorb adversarial learning into GAE to address 
            this issue and thus promote the performance such that they achieve 
            better performance. 
    \item In particular, EGAE obtains remarkable results on all metrics with 
            the same depth of encoder. 
            Compared with GAE, EGAE is a highly theory-driven model where
            the extracted deep features are more appropriate for relaxed 
            $k$-means. 
            On Cora, EGAE improves ACC by 4\% and 
            ARI by 9\% compared to the second-best method. 
            On Wiki, EGAE improves ACC by 4\%, 
            NMI by 3\%, and ARI by 16\% compared to the second-best method. 
    \item The effect of simultaneously learning GAE and clustering model is 
            apparent, especially on Citeseer and Wiki. 
            For instance, the joint learning improves ACC, NMI, and ARI 
            by 9, 8, and 13 percentage on Citeseer compared with EGAE 
            with $\alpha = 0$.
            ACC, NMI, and ARI increase by 7, 7, and 8 percentage on Wiki.
\end{itemize}

\begin{figure}[t]
    \centering
    \subfigure[Cora]{
        \includegraphics[width=0.46\linewidth]{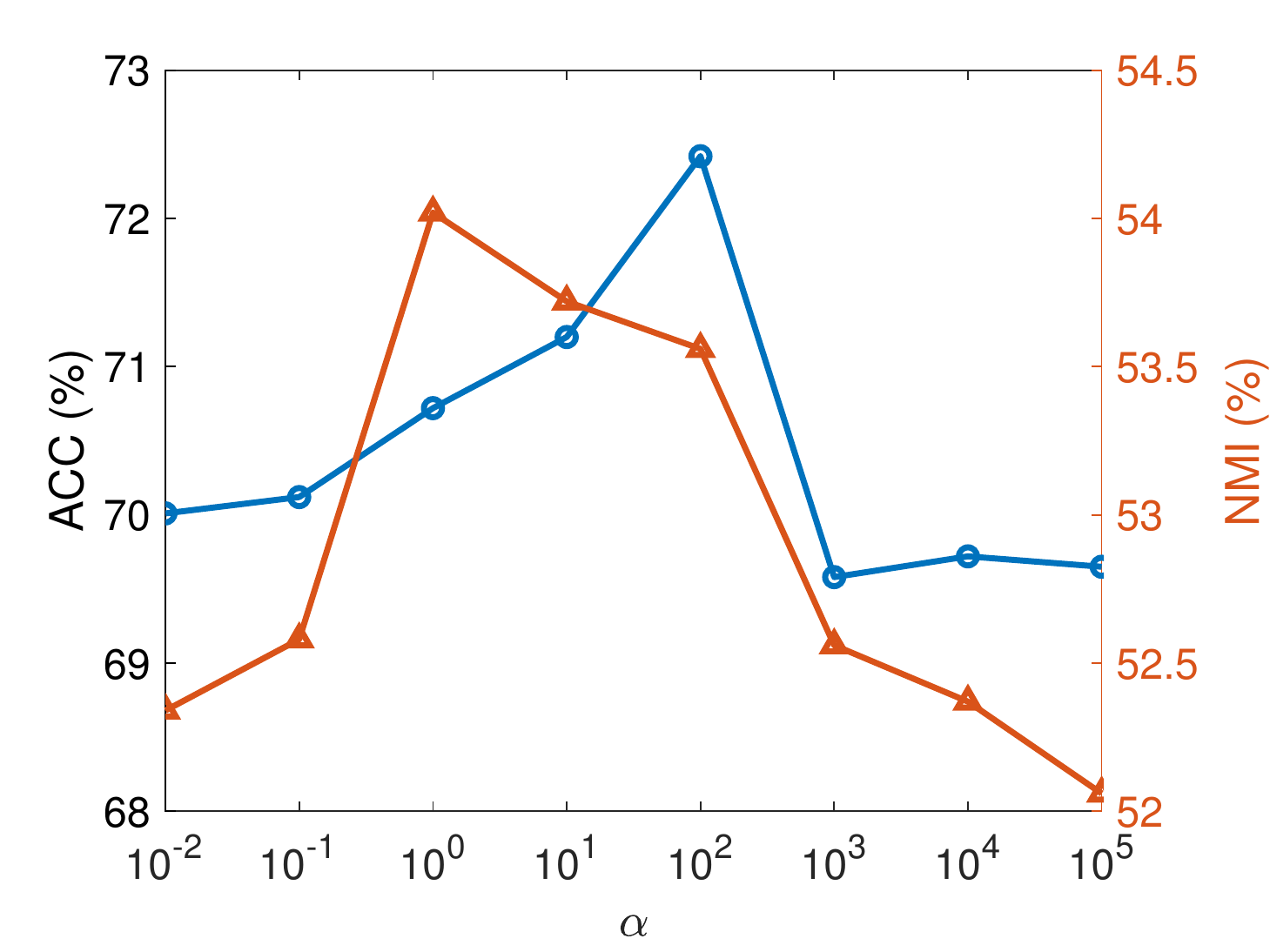}
    }
    \subfigure[Wiki]{
        \includegraphics[width=0.46\linewidth]{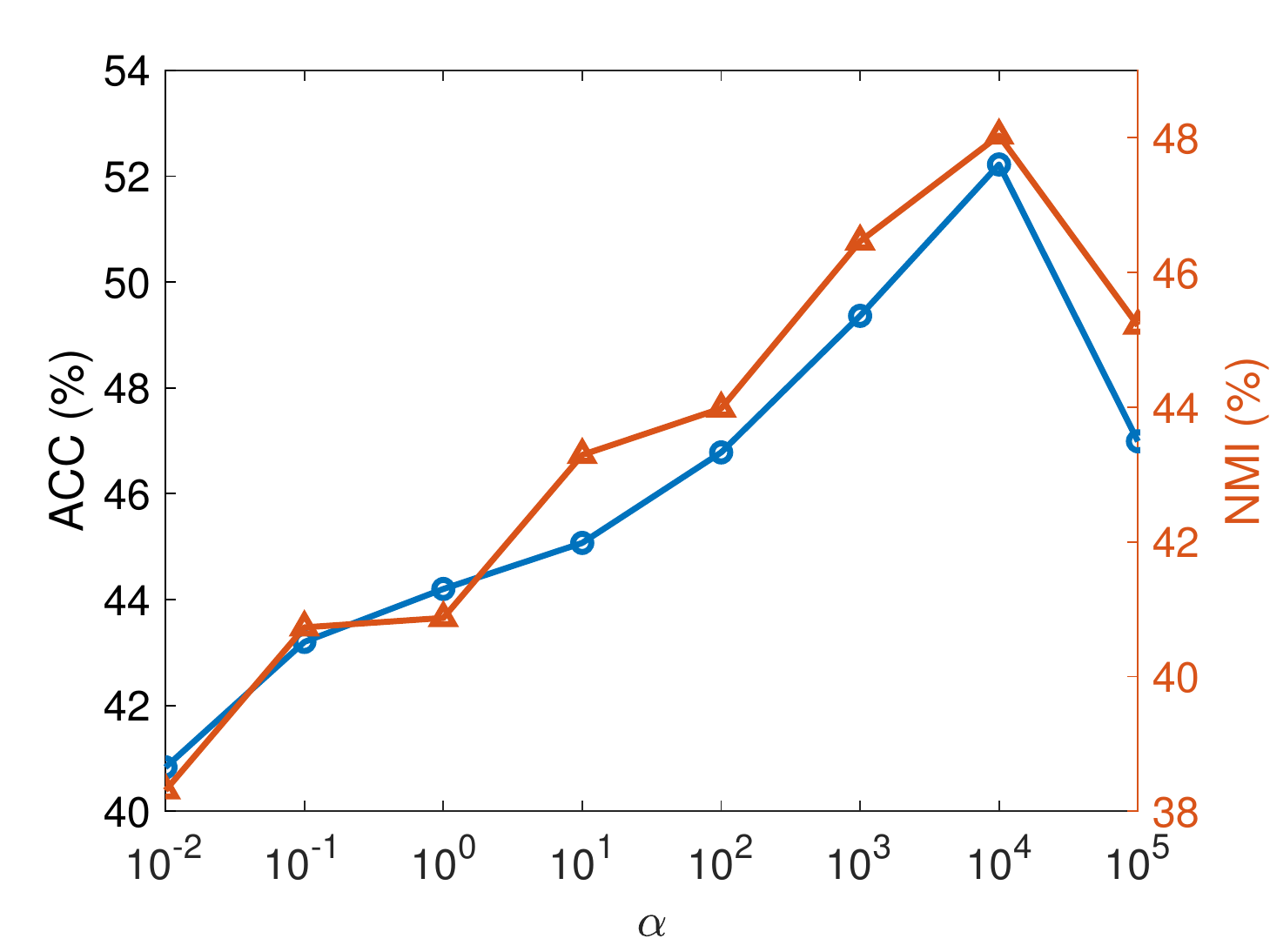}
    }
    \caption{Impact of $\alpha$ on Cora and Wiki. ACCs and NMIs of EGAE 
    with different $\alpha$ are shown.}
    \label{figure_dim_sensitivity}
\end{figure}

\subsection{Parameter Study} 
In our experiments, we focus on the impact of the tradeoff coefficient 
$\alpha$, which plays an important role in balancing two decoders. 
$\alpha$ varies in $\{10^{-2},$ $10^{-2},$ $10^{-1}, 10^0, 10^{1}, 10^{2}, 10^{3}, 10^{4}, 10^{5}\}$. 
Fig. \ref{figure_dim_sensitivity} demonstrates different performance with different $\alpha$ on Cora and Wiki. 
Not surprisingly, too large $\alpha$ may lead to generate biased embedding 
such that EGAE obtains undesirable results.
Combining the results shown in Table \ref{table_graph_results} and the two figures, 
we find that any individual decoder will not lead to the best results.

\section{Conclusion}

In this paper, we propose a novel GAE-based clustering model, 
Embedding Graph Auto-Encoder (\textit{EGAE}) for graph clustering,
driven by rigorous theoretical analyses about the relaxed $k$-means 
on inner-products space. 
Each component of EGAE is supported by theorems. 
We prove that Algorithm \ref{alg} will return an optimal solution 
in the ideal case.
Besides, experiments on synthetic datasets illustrate the theorems vividly.
Ablation experiments (EGAE and EGAE with $\alpha = 0$) also testify the 
effectiveness of the joint learning of embedding and clustering. 
Or equivalently, the results verify that collaborative training of the two 
decoders outperforms any single one. 
The analysis of computational complexity shows that the extra decoder 
will not result in expensive cost.
An extension that tries to rectify the biased indicator via sharing the 
adjacency matrix is further proposed. 
As the objective is a constrained problem, 
the optimization of EGAE is to update GAE by gradient descent and 
perform clustering alternatively. 
Extensive experiments prove the superiority of EGAE.

\bibliographystyle{IEEEbib}
\bibliography{EGAE-JOC}

\begin{thebibliography}{10}

\bibitem{clustering_base}
J.~Dunn,
\newblock ``A fuzzy relative of the isodata process and its use in detecting
  compact well-separated clusters,''
\newblock vol. 3, no. 3, pp. 32--57, 1973.

\bibitem{can}
F.~Nie, X.~Wang, and H.~Huang,
\newblock ``Clustering and projected clustering with adaptive neighbors,''
\newblock in {\em Proc. ACM SIGKDD Int. Conf. Knowl. Discovery Data Mining},
  2014, pp. 977--986.

\bibitem{sc1}
J.~Shi and J.~Malik,
\newblock ``Normalized cuts and image segmentation,''
\newblock {\em IEEE Trans. Pattern Anal. Mach. Intell.}, vol. 22, no. 8, pp.
  888--905, 2000.

\bibitem{sc2}
A.~Ng, M.~Jordan, and Y.~Weiss,
\newblock ``On spectral clustering: Analysis and an algorithm,''
\newblock in {\em Advances in neural information processing systems}, 2002, pp.
  849--856.

\bibitem{FMC-MTL}
R.~{Zhang}, H.~{Zhang}, and X.~{Li},
\newblock ``Robust multi-task learning with flexible manifold constraint,''
\newblock {\em IEEE Transactions on Pattern Analysis and Machine Intelligence},
  pp. 1--1, 2020.

\bibitem{gae}
Thomas~N Kipf and Max Welling,
\newblock ``Variational graph auto-encoders,''
\newblock {\em arXiv preprint arXiv:1611.07308}, 2016.

\bibitem{MGAE}
Chun Wang, Shirui Pan, Guodong Long, Xingquan Zhu, and Jiang Jing,
\newblock ``Mgae: Marginalized graph autoencoder for graph clustering,''
\newblock in {\em the 2017 ACM}, 2017.

\bibitem{dbscan}
M.~Ester, H.~Kriegel, J.~Sander, and X.~Xu,
\newblock ``Density-based algorithm for discovering clusters in large spatial
  databases with noise,''
\newblock in {\em KDD}, 1996, pp. 226--231.

\bibitem{AP}
B.~Frey and D.~Dueck,
\newblock ``Clustering by passing messages between data points,''
\newblock {\em Science}, vol. 315, no. 5814, pp. 972--976, 2007.

\bibitem{FKSC}
R.~Zhang, F.~Nie, M.~Guo, X.~Wei, and X.~Li,
\newblock ``Joint learning of fuzzy k-means and nonnegative spectral clustering
  with side information,''
\newblock {\em IEEE Transactions on Image Processing}, vol. 28, no. 5, pp.
  2152--2162, 2018.

\bibitem{ae}
G.~E Hinton and R.~Salakhutdinov,
\newblock ``Reducing the dimensionality of data with neural networks,''
\newblock {\em Science}, vol. 313, no. 5786, pp. 504--507, 2006.

\bibitem{spectral_net}
Uri Shaham, Kelly Stanton, Henry Li, Ronen Basri, Boaz Nadler, and Yuval
  Kluger,
\newblock ``Spectralnet: Spectral clustering using deep neural networks,''
\newblock in {\em International Conference on Learning Representations}, 2018.

\bibitem{sae}
F.~Tian, B.~Gao, Q.~Cui, E.~Chen, and T.~Liu,
\newblock ``Learning deep representations for graph clustering,''
\newblock in {\em Twenty-Eighth AAAI Conference on Artificial Intelligence},
  2014, pp. 1293--1299.

\bibitem{DFKM}
R.~{Zhang}, X.~{Li}, H.~{Zhang}, and F.~{Nie},
\newblock ``Deep fuzzy k-means with adaptive loss and entropy regularization,''
\newblock {\em IEEE Transactions on Fuzzy Systems}, vol. 28, no. 11, pp.
  2814--2824, 2020.

\bibitem{DeepCAN}
X.~{Li}, R.~{Zhang}, Q.~{Wang}, and H.~{Zhang},
\newblock ``Autoencoder constrained clustering with adaptive neighbors,''
\newblock {\em IEEE Transactions on Neural Networks and Learning Systems}, pp.
  1--7, 2020.

\bibitem{structae}
X.~Peng, J.~Feng, S.~Xiao, W.~Yau, T.~Zhou, and S.~Yang,
\newblock ``Structured autoencoders for subspace clustering,''
\newblock {\em IEEE Transactions on Image Processing}, vol. 27, no. 10, pp.
  5076--5086, 2018.

\bibitem{dec}
J.~Xie, R.~Girshick, and A.~Farhadi,
\newblock ``Unsupervised deep embedding for clustering analysis,''
\newblock in {\em International Conference on Machine Learning}, 2016, pp.
  478--487.

\bibitem{netgan}
Aleksandar Bojchevski, Oleksandr Shchur, Daniel Z{\"u}gner, and Stephan
  G{\"u}nnemann,
\newblock ``Netgan: Generating graphs via random walks,''
\newblock {\em arXiv preprint arXiv:1803.00816}, 2018.

\bibitem{graphgan}
Hongwei Wang, Jia Wang, Jialin Wang, Miao Zhao, Weinan Zhang, Fuzheng Zhang,
  Xing Xie, and Minyi Guo,
\newblock ``Graphgan: Graph representation learning with generative adversarial
  nets,''
\newblock in {\em Thirty-Second AAAI Conference on Artificial Intelligence},
  2018, pp. 2508--2515.

\bibitem{deep_walk}
Bryan Perozzi, Rami Al-Rfou, and Steven Skiena,
\newblock ``Deepwalk: Online learning of social representations,''
\newblock in {\em Proceedings of the 20th ACM SIGKDD international conference
  on Knowledge discovery and data mining}. ACM, 2014, pp. 701--710.

\bibitem{graph_embedding_1}
Xiao Wang, Peng Cui, Jing Wang, Jian Pei, Wenwu Zhu, and Shiqiang Yang,
\newblock ``Community preserving network embedding,''
\newblock in {\em Thirty-First AAAI Conference on Artificial Intelligence},
  2017, pp. 203--209.

\bibitem{graph_embedding_2}
Liyuan Liu, Linli Xu, Zhen Wangy, and Enhong Chen,
\newblock ``Community detection based on structure and content: A content
  propagation perspective,''
\newblock in {\em 2015 IEEE International Conference on Data Mining}. IEEE,
  2015, pp. 271--280.

\bibitem{TADW}
Cheng Yang, Zhiyuan Liu, Deli Zhao, Maosong Sun, and Edward Chang,
\newblock ``Network representation learning with rich text information,''
\newblock in {\em Twenty-Fourth International Joint Conference on Artificial
  Intelligence}, 2015, pp. 2111–--2117.

\bibitem{spectralgcn}
Joan Bruna, Wojciech Zaremba, Arthur Szlam, and Yann LeCun,
\newblock ``Spectral networks and locally connected networks on graphs,''
\newblock {\em arXiv preprint arXiv:1312.6203}, 2013.

\bibitem{chebnet}
Michael Defferrard, Xavier Bresson, and Pierre Vandergheynst,
\newblock ``Convolutional neural networks on graphs with fast localized
  spectral filtering,''
\newblock in {\em Advances in neural information processing systems}, 2016, pp.
  3844--3852.

\bibitem{gcn}
Thomas~N Kipf and Max Welling,
\newblock ``Semi-supervised classification with graph convolutional networks,''
\newblock {\em Proc. of ICLR}, 2017.

\bibitem{patchysan}
Mathias Niepert, Mohamed Ahmed, and Konstantin Kutzkov,
\newblock ``Learning convolutional neural networks for graphs,''
\newblock in {\em International conference on machine learning}, 2016, pp.
  2014--2023.

\bibitem{spatial1}
Alessio Micheli,
\newblock ``Neural network for graphs: A contextual constructive approach,''
\newblock {\em IEEE Transactions on Neural Networks}, vol. 20, no. 3, pp.
  498--511, 2009.

\bibitem{gat}
Petar Veli{\v{c}}kovi{\'c}, Guillem Cucurull, Arantxa Casanova, Adriana Romero,
  Pietro Lio, and Yoshua Bengio,
\newblock ``Graph attention networks,''
\newblock {\em arXiv preprint arXiv:1710.10903}, 2017.

\bibitem{gcn_survey}
Zonghan Wu, Shirui Pan, Fengwen Chen, Guodong Long, Chengqi Zhang, and Philip~S
  Yu,
\newblock ``A comprehensive survey on graph neural networks,''
\newblock {\em arXiv preprint arXiv:1901.00596}, 2019.

\bibitem{graph_op}
D.~I. {Shuman}, S.~K. {Narang}, P.~{Frossard}, A.~{Ortega}, and
  P.~{Vandergheynst},
\newblock ``The emerging field of signal processing on graphs: Extending
  high-dimensional data analysis to networks and other irregular domains,''
\newblock {\em IEEE Signal Processing Magazine}, vol. 30, no. 3, pp. 83--98,
  2013.

\bibitem{AdaGAE}
Xuelong Li, Hongyuan Zhang, and Rui Zhang,
\newblock ``Adaptive graph auto-encoder for general data clustering,''
\newblock {\em arXiv preprint arXiv:2002.08648}, 2020.

\bibitem{GALA}
Jiwoong Park, Minsik Lee, Hyung~Jin Chang, Kyuewang Lee, and Jin~Young Choi,
\newblock ``Symmetric graph convolutional autoencoder for unsupervised graph
  representation learning,''
\newblock in {\em Proceedings of the IEEE International Conference on Computer
  Vision}, 2019, pp. 6519--6528.

\bibitem{MixHopGCN}
Sami Abu-El-Haija, Bryan Perozzi, Amol Kapoor, Nazanin Alipourfard, Kristina
  Lerman, Hrayr Harutyunyan, Greg Ver~Steeg, and Aram Galstyan,
\newblock ``Mixhop: Higher-order graph convolutional architectures via
  sparsified neighborhood mixing,''
\newblock in {\em International Conference on Machine Learning}, 2019, pp.
  21--29.

\bibitem{SGC}
Felix Wu, Tianyi Zhang, Amaur Holanda~de Souza, Christopher Fifty, Tao Yu, and
  Kilian~Q Weinberger,
\newblock ``Simplifying graph convolutional networks,''
\newblock {\em Proceedings of Machine Learning Research}, 2019.

\bibitem{AGC}
Xiaotong Zhang, Han Liu, Qimai Li, and Xiao-Ming Wu,
\newblock ``Attributed graph clustering via adaptive graph convolution,''
\newblock in {\em Proceedings of the 28th International Joint Conference on
  Artificial Intelligence}. AAAI Press, 2019, pp. 4327--4333.

\bibitem{agae}
Shirui Pan, Ruiqi Hu, Guodong Long, Jing Jiang, Lina Yao, and Chengqi Zhang,
\newblock ``Adversarially regularized graph autoencoder for graph embedding.,''
\newblock in {\em IJCAI}, 2018, pp. 2609--2615.

\bibitem{t-SNE}
Van Der~Maaten Laurens and Geoffrey Hinton,
\newblock ``Visualizing data using t-sne,''
\newblock {\em Journal of Machine Learning Research}, vol. 9, no. 2605, pp.
  2579--2605, 2008.

\bibitem{graphencoder}
Fei Tian, Bin Gao, Qing Cui, Enhong Chen, and Tie-Yan Liu,
\newblock ``Learning deep representations for graph clustering,''
\newblock in {\em Twenty-Eighth AAAI Conference on Artificial Intelligence},
  2014, pp. 1293--1299.

\bibitem{dngr}
Shaosheng Cao, Wei Lu, and Qiongkai Xu,
\newblock ``Deep neural networks for learning graph representations,''
\newblock in {\em Thirtieth AAAI Conference on Artificial Intelligence}, 2016.

\bibitem{cora}
Prithviraj Sen, Galileo Namata, Mustafa Bilgic, Lise Getoor, Brian Galligher,
  and Tina Eliassi-Rad,
\newblock ``Collective classification in network data,''
\newblock {\em AI magazine}, vol. 29, no. 3, pp. 93--93, 2008.

\end{thebibliography}

\end{document}